\newtheorem{assumption}[theorem]{Assumption}
\newcommand{\argmin}{\mathop{\mathrm{argmin}}}
\newcommand{\argmax}{\mathop{\mathrm{argmax}}}
\def\R{\mathbb{R}}
\def\E{\mathbb{E}}
\def\1{\mathbbm{1}}
\def\half{\frac{1}{2}}
\def\cA{\mathcal{A}}
\def\cF{\mathcal{F}}
\def\cH{\mathcal{H}}
\def\cW{\mathcal{W}}
\def\cX{\mathcal{X}}
\def\cY{\mathcal{Y}}
\begin{document}

\title{No-Regret Linear Bandits under Gap-Adjusted Misspecification}

\author{\name Chong Liu \email cliu24@albany.edu \\
\addr Department of Computer Science\\
       University at Albany, State University of New York\\
       Albany, NY 12222, USA
\AND
Dan Qiao \email d2qiao@ucsd.edu \\
\addr Department of Computer Science and Engineering\\
       University of California, San Diego\\
       La Jolla, CA 92093, USA
\AND
Ming Yin \email ming\_yin@ucsb.edu \\
\addr Department of Computer Science\\
University of California, Santa Barbara\\
Santa Barbara, CA 93106, USA
       \AND
       \name Ilija Bogunovic \email i.bogunovic@ucl.ac.uk \\
       \addr Department of Electronic and Electrical Engineering\\
University College London\\
London, WC1E 7JE, UK
       \AND
       \name Yu-Xiang Wang \email yuxiangw@ucsd.edu \\
       \addr Halıcıoğlu Data Science Institute\\
       University of California, San Diego\\
       La Jolla, CA 92093, USA}

\editor{My editor}

\maketitle

\begin{abstract}
This work studies linear bandits under a new notion of \emph{gap-adjusted} misspecification and is an extension of \citet{liu2023no}. When the underlying reward function is \emph{not} linear, existing linear bandits work usually relies on a uniform misspecification parameter $\epsilon$ that measures the sup-norm error of the best linear approximation. This results in an unavoidable linear regret whenever $\epsilon > 0$. We propose a more natural model of misspecification which only requires the approximation error at each input $x$ to be proportional to the suboptimality gap at $x$.  It captures the intuition that, for optimization problems, near-optimal regions should matter more and we can tolerate larger approximation errors in suboptimal regions.

Quite surprisingly, we show that the classical LinUCB algorithm --- designed for the realizable case --- is automatically robust against such $\rho$-gap-adjusted misspecification with parameter $\rho$ diminishing at $O(1/(d \sqrt{\log T}))$. It achieves a near-optimal $O(\sqrt{T})$ regret for problems that the best-known regret is almost linear in time horizon $T$. We further advance this frontier by presenting a novel phased elimination-based algorithm whose \emph{gap-adjusted} misspecification parameter $\rho = O(1/\sqrt{d})$ does not scale with $T$. This algorithm attains optimal $O(\sqrt{T})$ regret and is deployment-efficient, requiring only $\log T$ batches of exploration. It also enjoys an adaptive $O(\log T)$ regret when a constant suboptimality gap exists.  Technically, our proof relies on a novel self-bounding argument that bounds the part of the regret due to misspecification by the regret itself, and a new inductive lemma that limits the misspecification error within the suboptimality gap for all valid actions in each batch selected by G-optimal design.
\end{abstract}

\begin{keywords}
Linear bandits, no-regret algorithm, misspecified bandits, gap-adjusted misspecification
\end{keywords}

\section{Introduction}\label{sec:intro}
Stochastic linear bandit is a classical problem of online learning and decision-making with many influential applications, e.g., A/B testing \citep{claeys2021dynamic}, recommendation systems \citep{chu2011contextual}, advertisement placements \citep{wang2021hybrid}, clinical trials \citep{moradipari2020stage}, hyperparameter tuning \citep{alieva2021robust}, and new material discovery \citep{katz2020empirical}.

More formally, the classical stochastic bandit problem which is a game between a player and nature. The goal of the player is to find the optimal point $x_*$ that is able to maximize the unknown objective function $f_0$:
\begin{align*}
x_* = \argmax_{x \in \cX} f_0(x),
\end{align*}
where $\cX \subseteq \mathbb{R}^d$ is the function domain. The game runs in iterations. For all iteration $t \in [T]$, the player chooses an action $x_t \in \cX$ and nature will release the noisy feedback:
\begin{align*}
y_t = f_0(x_t) + \eta_t,
\end{align*}
where the noise $\eta_t$ is independent, zero-mean, and $\sigma$-sub-Gaussian noise. Equivalently, the goal of the player is to minimize the cumulative regret the player has relative to an oracle who knows $x_*$ ahead of time, i.e.,
\begin{align}
R_T = \sum_{t=1}^T r_t = \sum_{t=1}^T f_0(x_*) - f_0(x_t),\label{eq:cr}
\end{align}
where $r_t$ is called instantaneous regret.

Despite being highly successful in the wild, existing theory for stochastic linear bandits \citep{abbasi2011improved,lattimore2020learning}, or more generally learning-oracle based bandits problems, relies on a \emph{realizability} assumption, i.e., the learner is given access to a function class $\cF$ such that the true expected reward $f_0: \cX\rightarrow \R$ satisfies that $f_0\in\cF$. Although realizability paves the way for a lot of theoretical work in both bandits \citep{srinivas2010gaussian} and reinforcement learning \citep{zhan2022offline}, it is considered one of the strongest and most restrictive assumptions in the standard statistical learning setting, but in the bandits theory literature the common sentiment is that it is mild and acceptable, despite the ``elephant in the room'' that everyone sees but voluntarily ignores — Realizability is never true in practice! For example, in recommendation systems problem \citep{chu2011contextual}, the reward function $f_0$ cannot be a perfect linear function w.r.t. the action feature vector. However, why is this the case? The argument to justify the realizability assumption is legitmate: all known attempts to deviate from the realizability assumption results in a regret that grows linearly with $T$ \citep{lattimore2020learning,bogunovic2021misspecified}.

\begin{figure}[t]
	\centering     
	\subfigure[$0.7$-uniform misspecification]{\label{fig:1d_unif}\includegraphics[width=0.49\linewidth]{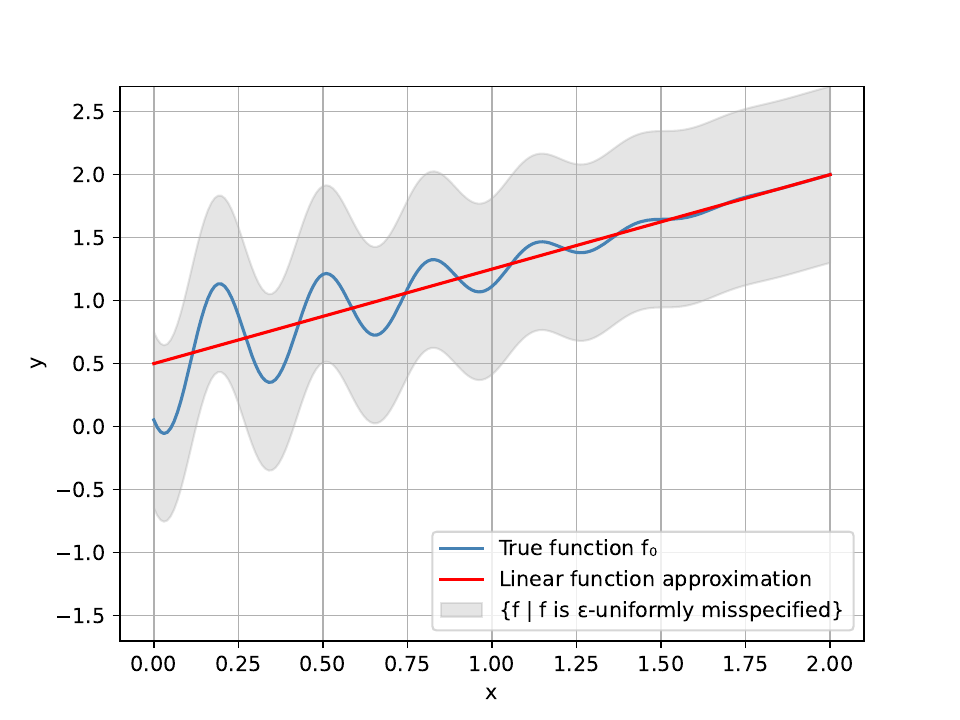}}
	\subfigure[$0.7$-gap-adjusted misspecification]{\label{fig:1d_gam}\includegraphics[width=0.49\linewidth]{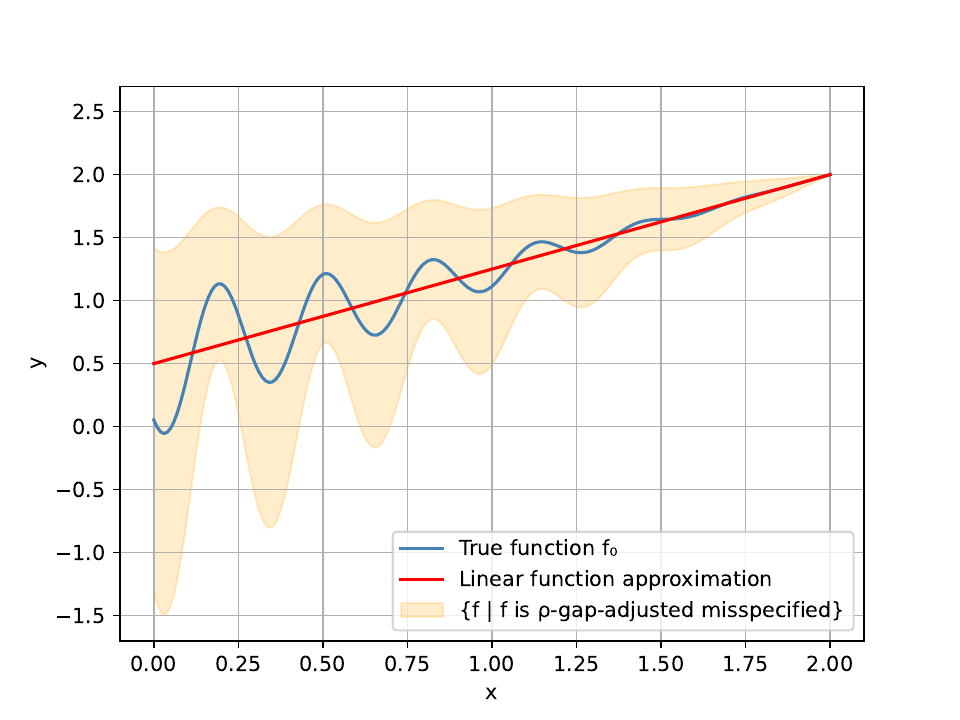}}
	\caption{Examples of misspecification in 1 dimension. The blue line denotes the non-linear true function $f_0$ and the red line shows a feasible linear function that is able to optimize $f_0$ by taking $x_*=2$. (a) An example of $\epsilon$-uniform misspecification (Definition \ref{def:unif_mis}) where $\epsilon=0.7$. The gray region shows the uniformly misspecified function class. Note the vertical range of it is always $2\epsilon=1.4$ over the whole domain. (b) An example of $\rho$-gap-adjusted misspecification (Definition \ref{def:gap_adj_mis}) where $\rho=0.7$. The orange region shows the gap-adjusted misspecified function class. Note the vertical range at a certain point $x$ depends on the suboptimal gap. For example, the vertical range at $x=0$ is much larger than it at $x=1$ and there is no vertical range at $x_*=2$.}
	\label{fig:1d}
\end{figure}

\begin{figure}[t]
	\centering     
	\subfigure[$100$-uniform misspecification]{\label{fig:2d_unif}\includegraphics[width=0.49\linewidth]{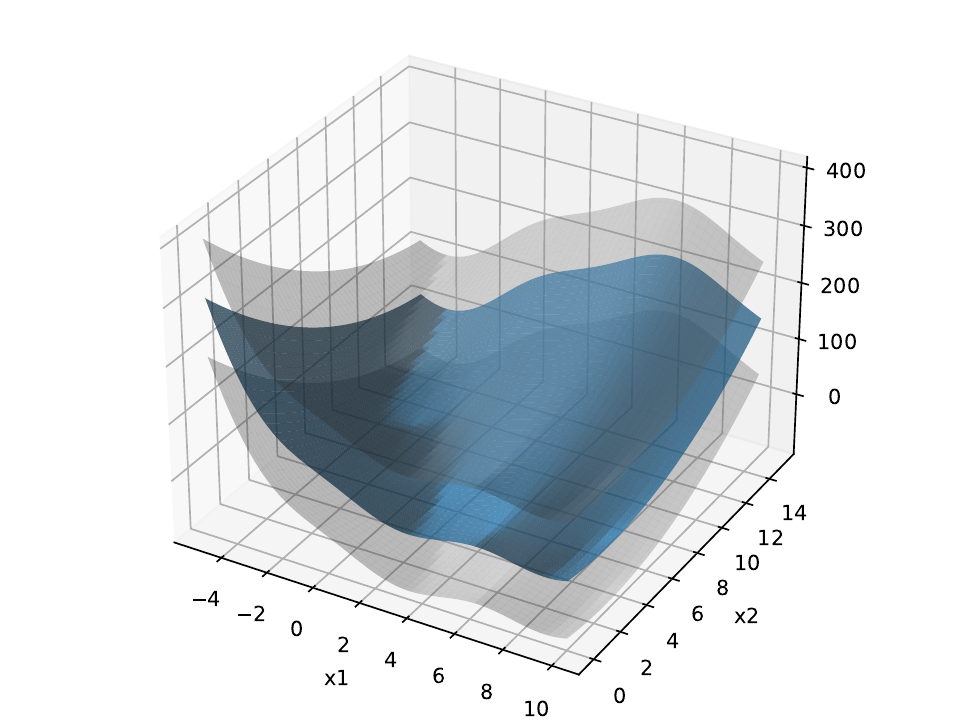}}
	\subfigure[$0.3$-gap-adjusted misspecification]{\label{fig:2d_gam}\includegraphics[width=0.49\linewidth]{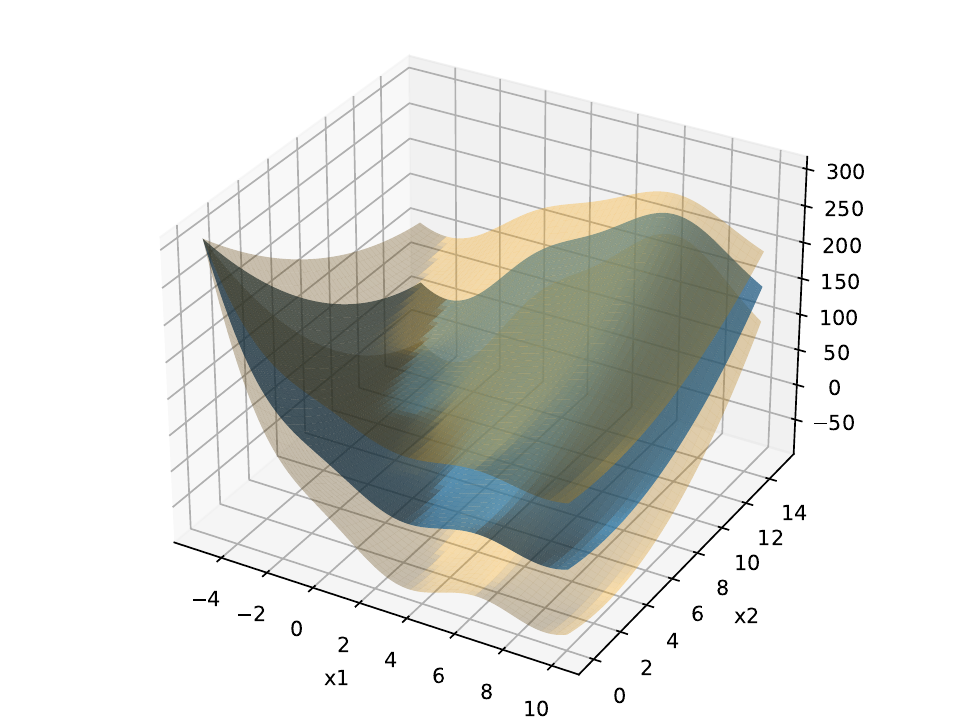}}
	\caption{Examples of misspecification in 2 dimensions. The blue surface denotes the Branin function $f_0$. The optimal point $x_*$ is $(x_1=-5,x_2=0)$. (a) An example of $\epsilon$-uniform misspecification (Definition \ref{def:unif_mis}) where $\epsilon=100$. Two gray surfaces denote the upper and lower bound of misspecified function class. (b) An example of $\rho$-gap-adjusted misspecification (Definition \ref{def:gap_adj_mis}) where $\rho=0.7$. Two orange surfaces denote the upper and lower bound of misspecified function class. Note there is no misspecification at $x_*$.}
	\label{fig:2d}
\end{figure}

To get rid of the realizability assumption, various works have studied the misspecified bandit problem and the most studied setting is $\epsilon$-uniform misspecifiation where the $\ell_\infty$ distance between the true function and the best-in-class approximation function is always upper bounded by the parameter $\epsilon$, defined as:

\begin{definition}[$\epsilon$-Uniform Misspecification]\label{def:unif_mis}
    Function class $\cF$ is an $\epsilon$-uniform misspecified approximation of $f_0$ if there exists $f\in \cF$ such that $\sup_{x\in\cX}|f(x) - f_0(x)| \leq \epsilon$.
\end{definition}

Under this definition, \citet{lattimore2020learning} studied linear bandits and \citet{bogunovic2021misspecified} studied Gaussian process bandits, however, both of them ended up in the $\tilde{O}(d\sqrt{T}+\epsilon \sqrt{d}T)$ cumulative regret regime, which (depending on $\epsilon$) can result in linear regret. The reason behind it is that $\epsilon$-uniform misspecification does not take the function structure into consideration, shown in Figure \ref{fig:1d_unif} and Figure \ref{fig:2d_unif}. In both cases, at each round, whatever the player learns about the true objective function, the best thing that the player can hope for to upper bound the misspecifiation error is only $\epsilon$. That is why the lower bound in \citet{bogunovic2021misspecified} shows that the $\tilde{\Omega}(\epsilon T)$ term is unavoidable in this setting.

In practical applications, it is often observed that feature-based representation of the actions with function approximations in
estimating the reward can result in very strong policies even if the estimated reward functions are far from
being correct \citep{foster2018practical}.

So what went wrong? The critical observation that we rely on is the following:
\begin{quote}
    Intuitively, it should be sufficient for the estimated reward function to clearly \emph{differentiate} good actions from bad ones, rather than requiring the function to perfectly estimate the rewards numerically. 
\end{quote}

Based on the issue raised by uniform misspecification and the key observation above, we define a new notation of model misspecification, which is called $\rho$-gap-adjusted misspecification. Under this new defintion, the $\ell_\infty$ distance between the true function and the best-in-class approximation function normalized by its suboptimality gap is always upper bounded by the parameter $\rho$, i.e.,

\begin{definition}[$\rho$-Gap-Adjusted Misspecification ($\rho$-GAM)]\label{def:gap_adj_mis}
Denote $f^*=\max_{x\in\mathcal{X}} f(x)$. Then $f$ is $\rho$-gap-adjusted misspecification approximation of $f_0$ for a parameter $0 \leq \rho < 1$ if:
\begin{align*}
\sup_{x \in \cX} \left| \frac{f(x) -f_0(x)}{f^* - f_0(x)}\right|\leq \rho.
\end{align*}
\end{definition}

Note $\rho$ is the ratio between two distances, thus $\rho$ cannot be directly compared with $\epsilon$ in their values. Figure \ref{fig:1d_gam} and Figure \ref{fig:2d_gam} show two examples under this condition respectively. The key intuition behind the $\rho$-GAM condition is that near-optimal region should matter more in optimization and larger misspecification is allowed in suboptimal regions.

In this paper, we systematically and theoretically investigate linear bandits under this new $\rho$-gap-adjusted misspecfication condition and our \textbf{contributions} are summarized as follows.

\textbf{Contributions.} 
In this paper, we formalize this intuition by defining a new family of misspecified bandits problems based on a condition that adjusts the need for an accurate approximation pointwise at every $x\in\cX$ according to the suboptimality gap at $x$. Unlike the existing misspecified linear bandits problems with a linear regret, our problem admits a nearly optimal $\tilde{O}(\sqrt{T})$ regret despite being heavily misspecified. Specifically:
\begin{itemize}
 \item We define $\rho$-\emph{gap-adjusted misspecified} ($\rho$-GAM) function approximations and characterize how they preserve important properties of the true function that are relevant for optimization. To the best of our knowledge, the suboptimality-gap-adjusted misspecification problem was not studied before and we are the first to obtain $\sqrt{T}$-style regrets without a realizability assumption.
 \item We show that the classical LinUCB algorithm \citep{abbasi2011improved} can be used \emph{as is} (up to some mild hyperparameter) to achieve an $\tilde{O}(\sqrt{T})$ regret under a moderate level of gap-adjusted misspecification ($\rho \leq O(1/\sqrt{\log T})$). In comparison, the regret bound one can obtain under the corresponding uniform-misspecification setting is only $\tilde{O}(T/\sqrt{\log T})$. This represents an exponential improvement in the average regret metric $R_T/T$.
 \item By working with a modified Phased Elimination (PE) algorithm \citep{lattimore2020learning}, we prove that under the $\rho$-GAM condition, PE algorithm achieves the $\tilde{O}(\sqrt{T})$ regret where $\rho$ can be a constant. It will inspire more research under the $\rho$-GAM condition, including kernelized bandits and reinforcement learning. Moreover, we prove that if there exists a positive suboptimal gap $\Delta$, PE algorithm achieves the $\tilde{O}(\log T/\Delta)$ regret where $\rho$ is still a constant.
    \item As a by-product of the algorithm design, PE algorithm enjoys low $O(\log T)$ policy switching cost, which is highly deployment-efficient in practice. In constrast, policy switching cost of LinUCB algorithm \citep{abbasi2011improved} is $O(T)$.
\end{itemize}

\textbf{Technical novelty.} Due to misspecification, we have technical challenges that appear in bounding the instantaneous regret and the uncertainty region. Specifically,

\begin{enumerate}
    \item We tackle the challenge by self bounding trick, i.e., bounding the instantaneous regret by the instantaneous regret itself, which can be of independent interest in more settings, e.g., Gaussian process bandit optimization and reinforcement learning. 
    \item Due to the potential function lemma in LinUCB algorithm, $\rho \leq \tilde{O}(1/\sqrt{\log T})$ cannot be improved to constant order. To address this challenge, we work with the PE algorithm which maintains a tighter action version space rather than the parameter space in \citet{abbasi2011improved}. 
    \item Since misspecification under the $\rho$-GAM condition depends on suboptimality gap, it is different at different $x \in \cX$ and analysis in \citet{lattimore2020learning} doesn't work through. We prove a new inductive lemma that limits the misspecification error within the suboptimality gap for all valid actions in each batch selected by G-optimal design.
\end{enumerate}

The rest of the paper is organized as follows. In Section \ref{sec:rw}, we briefly review related work. In Section \ref{sec:pre}, we prepare all necessary materials before showing our main results, including notations, problem statement, all needed assumptions, LinUCB algorithm \citep{abbasi2011improved}, and a slightly modified phased elimination algorithm from \citet{lattimore2020learning}. We present theoretical results of LinUCB algorithm under gap-adjusted misspecification in Section \ref{sec:linucb} where $\rho=O(1/(d\sqrt{\log T}))$ is required. Section \ref{sec:pe} shows theoretical results of the new PE algorithm where $\rho=O(1/\sqrt{d})$ does not scale with $T$. Finally Section \ref{sec:conclusion} concludes the paper. For completeness, we show all technical lemmas in Appendix \ref{app:tech}, discuss the weaker $\rho$-gap-adjusted misspecification condition in Appendix \ref{app:weak}, and propose a unified misspecified linear bandit framework in Appendix \ref{app:unified}.

\section{Related Work}\label{sec:rw}

The problem of linear bandits was first introduced in \citet{abe1999associative}. Then \citet{auer2002finite} proposed the upper confidence bound to study linear bandits where the number of actions is finite. Based on it, \citet{dani2008stochastic} proposed an algorithm based on confidence ellipsoids and then \citet{abbasi2011improved} simplified the proof with a novel self-normalized martingale bound. Later \citet{chu2011contextual} proposed a simpler and more robust linear bandit algorithm and showed $\tilde{O}(\sqrt{d T})$ regret cannot be improved beyond a polylog factor. \citet{li2019nearly} further improved the regret upper and lower bound, which characterized the minimax regret up to an iterated logarithmic factor. See \citet{lattimore2020bandit} for a detailed survey of linear bandits. 

A standard assumption in the theory of stochastic linear bandits requires a well-specified reward function, i.e., a ``realizability'' assumption. This assumes that the expected reward linear in the action features, which might not be true in practice. To overcome this limitation, a recent line of research focuses on demonstrating the robustness of linear bandits algorithms by showing that if a linear function can approximate the underlying reward function up to an additive constant of $\epsilon$. Regret bound of the form $\tilde{O}(d\sqrt{T} + \epsilon \sqrt{d}T)$ has been shown to be minimax optimal under various settings \citep{foster2020beyond,neu2020efficient,zanette2020learning,lattimore2020learning,bogunovic2021misspecified,krishnamurthy2021tractable,foster2020adapting}

In terms of misspecification, \citet{ghosh2017misspecified} first studied the misspecified linear bandit with a fixed action set. They found that the LinUCB algorithm \citep{abbasi2011improved} is not robust when misspecification is large. They showed that in a favourable case when one can test the linearity of the reward function, their RLB algorithm is able to switch between the linear bandit algorithm and finite-armed bandit algorithm to address misspecification issue and achieve the $\tilde{O}(\min \{\sqrt{K},d\}\sqrt{T})$ regret where $K$ is number of arms.

The most studied setting of model misspecification is uniform misspecification where the $\ell_\infty$ distance between the best-in-class function and the true function is always upper bounded by some parameter $\epsilon$. Under this definition, \citet{lattimore2020learning} proposed the optimal design-based phased elimination algorithm for misspecified linear bandits and achieved $\tilde{O}(d\sqrt{T} + \epsilon \sqrt{d} T)$ regret when number of actions is infinite. They also found that with modified confidence band in LinUCB, LinUCB is able to achieve the same regret. With the same misspecification model, \citet{foster2020beyond} studied contextual bandit with regression oracle, \citet{neu2020efficient} studied multi-armed linear contextual bandit, and \citet{zanette2020learning} studied misspecified contextual linear bandits after reduction of the algorithm. All of their results suffer from linear regrets. Later \citet{bogunovic2021misspecified} studied misspecified Gaussian process bandit optimization problem and achieved $\tilde{O}(d\sqrt{T} + \epsilon \sqrt{d} T )$ regret when linear kernel is used in Gaussian process. Moreover, their lower bound shows that $\tilde{\Omega}(\epsilon T)$ term is unavoidable in this setting. 

Besides uniform misspecification, there are some work considering different definitions of misspecification in contextual bandits. \citet{krishnamurthy2021tractable} defines misspecification error as an expected squared error between true function and best-in-class function where expectation is taken over distribution of context space and action space. \citet{foster2020adapting} considered average misspecification, which is weaker than uniform misspecification and allows tighter regret bound. However, they also have linear regrets and their results do not directly apply to our problem because our action space is unbounded. 
Our work is different from all related work mentioned above because we are working under a newly defined misspecifiation condition and show that LinUCB is a no-regret algorithm in this case.

Model misspecification is naturally addressed in the related \emph{agnostic} contextual bandits setting \citep{agarwal2014taming}, but these approaches typically require the action space to be finite, thus not directly applicable to our problem. In addition, empirical evidence \citep{foster2018practical} suggests that the regression oracle approach works better in practice than the agnostic approach even if realizability cannot be verified. Recently, \citet{ye2023corruption} and \citet{ye2024corruption} studied corruption-robust nonlinear contextual bandits and offline reinforcement learning, respectively, but they still achieved $\tilde{O}(\sqrt{T}+\zeta)$ bound in the bandit setting where $\zeta$ is the total amount of corruption. Most recently, a concurrent work \citep{liu2024corruption} independently proposes a similar algorithm that works for gap-adjusted misspecified linear bandits, connects our gap-adjusted misspecification definition with corruption-robust learning, and extends it to reinforcement learning.

As a by-product, phased elimination enjoys another desired property known as ``deployment efficiency''. As it requires only $O(\log T)$ batches of explorations, each batch is embarrassingly parallel. Such low-adaptive (or batched) exploration problems are well-studied in bandits and reinforcement learning problems. The $\log(T)$-deployment efficiency from the current paper is a new state-of-the-art for linear bandits under misspecification. For well-specified problems, algorithms with $\tilde{O}(\sqrt{T})$ regret and $\log\log T$-deployment efficiency are known under various settings \citep{cesa2013online,perchet2016batched,gao2019batched,ruan2021linear,qiao2022sample,qiao2023logarithmic,zhang2022near}. It is unclear whether it can be achieved under misspecification. For $\tilde{O}(\log T)$ regret under the constant gap condition, our $\log T$-deployment efficiency is known to be near-optimal \citep{gao2019batched}.

\section{Preliminaries}\label{sec:pre}

\subsection{Notations}\label{sec:notation}
Let $[n]$ denote the integer set $\{1,2,...,n\}$ and $T$ denote the time horizon, i.e., total number of observations of the algorithm. Let $f_0$ denote the underlying true function, so the maximum function value is defined as $f^* = \max_{x \in \cX} f_0(x)$ and the maximum point is defined as $x^* = \argmax_{x \in \cX} f_0(x)$. Note $f_0$ can be a non-linear function since we are considering misspecificed bandits. Let $\cX \subseteq \R^d$ and $\cY \subseteq \R$ denote the domain and range of $f_0$. When $\mathcal{X}$ is finite, $|\mathcal{X}|=k$. Given a function $f$, $|f(x)-f_0(x)|$ denotes the approximation error and $f^* - f_0(x)$ denotes the suboptimality gap at point $x$. We use $\cW$ to denote the parameter class of a family of linear functions $\cF := \{f_w: \cX \rightarrow \cY|w \in \cW\}$ where $f_w(x)=w^\top x$. Let $w_* =\argmin_{w \in \cW} |f_w - f_0|$ denote the best approximation parameter.
Given a vector $x$, its $\ell_2$ norm is denoted by $\|x\|_2 = \sqrt{\sum_{i=1}^d x^2_i}$, given a matrix $A$ its operator norm is denoted by $\|A\|_\mathrm{op}$, and given a vector $x$ and a square matrix $A$, define $\|x\|^2_A = x^\top A x$.

\subsection{Problem Setup}\label{sec:setup}
We consider the following optimization problem:
\begin{align*}
x_* = \argmax_{x \in \cX} f_0(x),
\end{align*}
where $f_0$ is the true function which might not be linear in $\cX$. We want to use a linear function $f_w=w^\top x\in\mathcal{F}$ to approximate $f_0$ and maximize $f_0$. At time $0\leq t \leq T-1$, after querying a data point $x_t$, we will receive a noisy feedback:
\begin{align}
y_t = f_0(x_t) + \eta_t, \label{eq:obs}
\end{align}
where $\eta_t$ is independent, zero-mean, and $\sigma$-sub-Gaussian.

The major highlight of our study is that we do not rely on the popular \emph{realizability} assumption (\emph{i.e.} $f_0\in\mathcal{F}$) that is frequently assumed in the existing function approximation literature. Alternatively, we propose the following gap-adjusted misspecification condition.

\begin{definition}[$\rho$-Gap-Adjusted Misspecification]\label{def:lm}
We say a function $f$ is a $\rho$-gap-adjusted misspecified (or $\rho$-GAM in short) approximation of $f_0$ if 
for parameter $0 \leq \rho < 1$,
\begin{align*}
\sup_{x \in \cX} \left| \frac{f(x) - f_0(x)}{f^* - f_0(x)}\right|\leq \rho.\label{eq:local}
\end{align*}
We say function class $\cF=\{f_w | w\in\cW\}$ satisfies $\rho$-GAM in short) for $f_0$, if 
there exists $w^*\in\cW$ such that $f_{w_*}$ is a $\rho$-GAM approximation of $f_0$.
\end{definition}
Observe that when $\rho = 0$, this recovers the standard realizability assumption, but when $\rho>0$ it could cover many misspecified function classes.

Figure~\ref{fig:1d} shows a 1-dimensional example with $f_w(x)= 0.75x+0.5$ and piece-wise linear function $f_0(x)$ that satisfies local misspecification. With Definition~\ref{def:lm}, we have the following proposition. Its proof is shown in Appendix \ref{app:tech}.

\begin{proposition}\label{prop:perservation} Let $f$ be a $\rho$-GAM approximation of $f_0$ (Definition~\ref{def:lm}). Then it holds:
\begin{itemize}
\item (Preservation of maximizers) $$\argmax_{x}f(x) =\argmax_{x}f_{0}(x).$$
\item  (Preservation of max value) $$\max_{x\in\mathcal{X}}f(x)=f^*.$$
\item (Self-bounding property) $$|f(x) - f_0(x)| \leq \rho (f^* - f_0(x)) = \rho r(x).$$
\end{itemize}
\end{proposition}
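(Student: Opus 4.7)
The plan is to establish the three claims in the stated order (3), (2), (1), because each subsequent claim leverages the previous one. Before starting, observe that the $\rho$-GAM definition contains an implicit boundary condition at any maximizer $x^*$ of $f_0$: since $f^* - f_0(x^*) = 0$, the only way for the ratio $|f(x^*)-f_0(x^*)|/(f^*-f_0(x^*))$ to be bounded by $\rho < 1$ is to have $f(x^*) = f_0(x^*) = f^*$. I will invoke this observation whenever needed.

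For the self-bounding property, I would simply rearrange Definition~\ref{def:lm}. For any $x$ with $f_0(x) < f^*$, multiplying the defining inequality by $f^* - f_0(x) > 0$ gives $|f(x) - f_0(x)| \leq \rho(f^* - f_0(x))$. At a point $x^*$ with $f_0(x^*) = f^*$, the boundary observation above yields $|f(x^*) - f_0(x^*)| = 0 = \rho \cdot (f^* - f_0(x^*))$, so the bound holds trivially. Combining both cases and using the definition $r(x) = f^* - f_0(x)$ delivers $|f(x) - f_0(x)| \leq \rho\, r(x)$.

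For preservation of the max value, I would use the self-bounding property to peel off the absolute value and write, for every $x \in \cX$,
\begin{align*}
f(x) \;\leq\; f_0(x) + \rho(f^* - f_0(x)) \;=\; (1-\rho)f_0(x) + \rho f^* \;\leq\; f^*,
\end{align*}
where the last step uses $f_0(x) \leq f^*$ and $1-\rho > 0$. Taking the supremum over $x$ gives $\max_x f(x) \leq f^*$, while the boundary observation $f(x^*) = f^*$ gives the matching lower bound, so $\max_x f(x) = f^*$.

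For preservation of maximizers, the forward inclusion is immediate: any $x^* \in \argmax_x f_0(x)$ satisfies $f(x^*) = f^* = \max_x f(x)$ by the previous step, so $x^* \in \argmax_x f(x)$. For the reverse inclusion, let $\hat{x} \in \argmax_x f(x)$, so that $f(\hat{x}) = f^*$. Applying the self-bounding property at $\hat{x}$ yields $|f^* - f_0(\hat{x})| \leq \rho(f^* - f_0(\hat{x}))$, i.e., $(1-\rho)(f^* - f_0(\hat{x})) \leq 0$. Since $\rho < 1$, this forces $f_0(\hat{x}) \geq f^*$, hence equality, and $\hat{x} \in \argmax_x f_0(x)$. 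Honestly, there is no deep obstacle here; the only subtle point is the boundary behavior at $x^*$, and once that is handled the rest is algebraic unpacking of Definition~\ref{def:lm} together with the constraint $\rho < 1$, which is exactly what makes the final step in (1) non-degenerate.
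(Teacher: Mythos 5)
Your proof is correct and follows essentially the same route as the paper's: both arguments rest on the rearranged inequality $f(x) \le \rho f^* + (1-\rho) f_0(x)$ together with the observation that $f$ and $f_0$ must agree at any maximizer of $f_0$, and the $(1-\rho)>0$ cancellation in the reverse inclusion. The only cosmetic differences are that you prove $\max_{x} f(x)\le f^*$ directly rather than by contradiction via a compactness witness, and that you make explicit the $0/0$ boundary behavior of Definition~\ref{def:lm}, which the paper absorbs by restating the condition in the multiplied-out form $|f(x)-f_0(x)|\le\rho\,|f^*-f_0(x)|$.
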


This tells $f_{w_*}$ and $f_0$ coincide on the same global maximum points and the same global maxima if Definition \ref{def:lm} is satisfied, while allowing $f_{w_*}$ and $f_0$ to be different (potentially large) at other locations. Therefore, Definition~\ref{def:lm} is a ``local'' assumption that does not require $f_{w_*}$ to be uniformly close to $f_0$ (e.g. the ``uniform'' misspecification assumption $\sup_{x\in\mathcal{X}}|f_{w_*}(x)-f_0(x)|\leq \rho$).

\subsection{Assumptions}\label{sec:ass}
Here we list all assumptions that we work with throughout this paper. Note that the additional assumption is not required when $f_0$ is realizable. The first assumption is on the boundness of function domain, parameter space, and function range.

\begin{assumption}[Boundedness]\label{ass:bound}
For any $x\in\cX$, $\|x\|_2\leq C_b$.  For any $w\in\cW$, $\|w\|_2\leq C_w$. Moreover, for any $x,\tilde{x}\in\cX$, the true expected reward function $|f_0(x) - f_0(\tilde{x})| \leq 1$.
\end{assumption}
These are mild assumptions that we assume for convenience. Relaxations of these are possible but not the focus of this paper. 

\begin{assumption}\label{ass:one}
Suppose $\mathcal{X}\in\R^d$ is a compact set, and all the global maximizers of $f_0$ live on the $d-1$ dimensional hyperplane. i.e., $\exists a\in\R^d,b\in \R^1$, s.t. 
\begin{align*}
\argmax_{x\in\mathcal{X}}f_{0}(x)\subset \{x\in\R^d: x^\top a=b\}.
\end{align*}
\end{assumption}
For instance, when $d=1$, the above reduces to that $f_0$ has a unique maximizer. This is a compatibility assumption for Definition~\ref{def:lm}, since any linear function that violates Assumption~\ref{ass:one} will not satisfy Definition \ref{def:lm}.

In addition, we have the following two assumptions on gap-adjusted misspecification parameter $\rho$. Assumption \ref{ass:rho} is stronger than Assumption \ref{ass:rho2} and they will work with the LinUCB algorithm and the new PE algorithm respectively in this paper.

\begin{assumption}[Low Misspecification Error]\label{ass:rho}
The linear function class is a $\rho$-GAM approximation of $f_0$ with
\begin{align}
\rho < \frac{1}{8 d \sqrt{\log \left(1 + \frac{T C^2_b C^2_w}{d \sigma^2}\right)}} = O\left( \frac{1}{d\sqrt{\log T}}\right).
\end{align}
\end{assumption}
The condition is required for technical reasons with the LinUCB algorithm. While this assumption may suggest that we still require realizability in a truly asymptotic world since $T \rightarrow \infty, \rho =0$ violating the motivation of the $\rho$-GAM condition, handling a $O(1/\sqrt{\log T})$ level of misspecification is highly non-trivial in finite sample. For instance, if $T$ is a trillion, $1/\sqrt{\log (1e12)} \approx 0.19$. This means that for most practical cases, LinUCB is able to tolerate a constant level of misspecification under the GAM model. 

Relaxing this condition for LinUCB requires fundamental breakthroughs that knock out logarithmic factors from its regret analysis. This will be further clarified in the proof. Our last assumption is weaker than Assumption \ref{ass:rho} but our newly designed PE algorithm is still able to work with this assumption and achieve the $\tilde{O}((1+\rho)\sqrt{T})$ regret for a constant $\rho < 1$. The constant $16$ is due to technical reason which will be shown in proofs.

\begin{assumption}[Constant and Low Misspecification Error]\label{ass:rho2}
The linear function class is a $\rho$-GAM approximation of $f_0$ with
\begin{align*}
\rho \leq \frac{1}{16 \sqrt{d}}.
\end{align*}
\end{assumption}

\subsection{Algorithms}\label{sec:alg}

Here we first show details of the classical Linear Upper Confidence Bound (LinUCB) algorithm \citep{dani2008stochastic,abbasi2011improved} and then slightly modify the Phased Elimination (PE) algorithm from \citet{lattimore2020learning} by changing the constant from $2$ to $16$ in Step 6.

\begin{algorithm}[!htbp]
\caption{LinUCB \citep{abbasi2011improved}}
	\label{alg:linucb}
	{\bf Input:}
	Predefined sequence $\beta_t$ for $t=1,2,3,...$; 
 Set $\lambda=\sigma^2/C^2_w$ and $\mathrm{Ball}_0 = \cW$.
	\begin{algorithmic}[1]
	    \FOR{$t = 0,1,2,... $}
	    \STATE Select $x_t=\argmax_{x \in \cX} \max_{w \in \mathrm{Ball}_t} w^\top x$.
	    \STATE Observe $y_t = f_0(x_t) + \eta_t$.
     \STATE Update 
     \vspace{-1em}
     \begin{align}
\Sigma_{t+1} = \lambda I + \sum_{i=0}^{t} x_i x^\top_i \mathrm{where}\  \Sigma_0 = \lambda I.\label{eq:sigma_t}
\end{align}
\vspace{-1em}
	    \STATE Update 
	    \vspace{-1em}
	    \begin{align}
\hat{w}_{t+1} = \argmin_x \lambda \|w\|^2_2+ \sum_{i=0}^{t} (w^\top x_i - y_i)^2_2.\label{eq:w_t_opt}
\end{align}
\vspace{-1em}
    \STATE Update $\mathrm{Ball}_{t+1} = \{w | \|w - \hat{w}_{t+1}\|^2_{\Sigma_{t+1}} \leq \beta_{t+1}\}.$
		\ENDFOR
	\end{algorithmic}
\end{algorithm}

\begin{algorithm}[!htbp]
\caption{Phased Elimination (adapted from \citet{lattimore2020learning})}\label{alg:pe}
{\bf Input:}
$\cX \subseteq \mathbb{R}^d$, confidence level $\alpha \in (0, 1)$.
\begin{algorithmic}[1]
\STATE Set $m= \lceil 4d \log \log (d)\rceil +16$.
\STATE Find design $\pi: \cA \rightarrow [0,1]$ with $g(\pi) \leq 2d$ and $|\mathrm{supp}(\pi)|\leq 4 d \log \log(d) + 16$.
\STATE Compute $u(x) = \lceil m \pi(x)\rceil$ and $u=\sum_{x \in \cX} u(x)$.
\STATE Take each action $x \in \cX$ exactly $u(x)$ times with corresponding features $\{x_s\}_{s=1}^u$ and rewards $\{y_s\}_{s=1}^u$.
\STATE Calculate the vector $\hat{w}$:
\vspace{-1em}
\begin{align}
\hat{w} = G^{-1} \sum_{s=1}^u x_s y_s \quad \mathrm{where} \quad G=\sum_{s=1}^u x_s x_s^\top.\label{eq:hat_w}
\end{align}
\vspace{-1em}
\STATE Update the active set:
\vspace{-1em}
\begin{align*}
\cX \leftarrow \left\{ x \in \cX: \max_{b \in \cX} \hat{w}^\top (b -x) \leq 16 \sqrt{\frac{d}{m}\log \left(\frac{1}{\alpha}\right)} \right\}.
\end{align*}
\vspace{-1em}
\STATE $m \leftarrow 2m$ and GOTO Step 1.
\end{algorithmic}
\end{algorithm}

\section{Results of LinUCB Algorithm}\label{sec:linucb}

In this section, we show that the classical LinUCB algorithm \citep{abbasi2011improved} works in $\rho$-gap-adjusted misspecified linear bandits and achieves cumulative regret at the order of $\tilde{O}(\sqrt{T}/(1-\rho))$. The following theorem shows the cumulative regret bound.

\begin{theorem}
Suppose Assumptions \ref{ass:bound}, \ref{ass:one}, and \ref{ass:rho} hold. Set 
\begin{align}
\beta_t = 8\sigma^2 \left(1 + d\log\left(1+ \frac{t C^2_b C^2_w }{d \sigma^2} \right) + 2\log \left(\frac{\pi^2 t^2}{3\delta} \right)\right).\label{eq:beta_t}
\end{align} 
Then w.p. $> 1-\delta$ for simultaneously for all $T=1,2,...$
\begin{align*}
R_T &\leq 1 + \sqrt{\frac{8 (T-1) \beta_{T-1} d}{(1-\rho)^2} \log \left( 1 + \frac{T C^2_b C^2_w }{d \sigma^2 }\right)}.
\end{align*}
\end{theorem}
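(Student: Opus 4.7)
The plan is to couple three ingredients: (i) a confidence ellipsoid that corrects the standard ridge‐regression analysis for the misspecification bias, (ii) the self-bounding property of the $\rho$-GAM assumption from Proposition~\ref{prop:perservation}, and (iii) the elliptical potential lemma. First, I would fix the ``good event'' $\mathcal{E}=\{w_*\in\mathrm{Ball}_t\text{ for all }t\geq 1\}$ and show $\mathbb{P}(\mathcal{E})\geq 1-\delta$. Writing out the ridge estimator, I would decompose
\begin{align*}
\hat w_t-w_* \;=\; \Sigma_t^{-1}\!\!\sum_{i<t}x_i\eta_i \;-\; \lambda\,\Sigma_t^{-1}w_* \;+\; \Sigma_t^{-1}\!\!\sum_{i<t}x_i\bigl(f_0(x_i)-w_*^{\top}x_i\bigr).
\end{align*}
The first two terms are the classical centre of $\beta_t$ via the self-normalised martingale inequality of Abbasi-Yadkori et al., tuned so that $\lambda=\sigma^2/C_w^2$ absorbs the regulariser bias using Assumption~\ref{ass:bound}. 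For the third (misspecification) term, I apply Cauchy--Schwarz in the $\Sigma_t^{-1}$ inner product, together with $V_t\preceq\Sigma_t$, to obtain $\bigl\|\sum_{i<t}x_i(f_0(x_i)-w_*^{\top}x_i)\bigr\|_{\Sigma_t^{-1}} \leq \sqrt{\sum_{i<t}(f_0(x_i)-w_*^{\top}x_i)^2} \leq \rho\sqrt{\sum_{i<t}r_i^{\,2}}$, using the self-bounding inequality $|f_{w_*}(x_i)-f_0(x_i)|\leq \rho\,r_i$.

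Next, on $\mathcal{E}$ I would establish the per-round regret bound via a self-bounding optimism argument. Since $f_{w_*}(x_*)=f^*$ (preservation of the max value from Proposition~\ref{prop:perservation}), LinUCB's choice $(x_t,\tilde w_t)=\mathrm{argmax}$ satisfies $\tilde w_t^{\top}x_t\geq w_*^{\top}x_*=f^*$ whenever $w_*\in\mathrm{Ball}_t$. Splitting
\begin{align*}
r_t \;=\; f^*-f_0(x_t) \;\leq\; (\tilde w_t-w_*)^{\top}x_t + \bigl(w_*^{\top}x_t-f_0(x_t)\bigr) \;\leq\; 2\sqrt{\beta_t}\,\|x_t\|_{\Sigma_t^{-1}} + \rho\,r_t,
\end{align*}
and solving for $r_t$ gives the key per-round bound $r_t\leq \tfrac{2\sqrt{\beta_t}\,\|x_t\|_{\Sigma_t^{-1}}}{1-\rho}$.

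The main obstacle is closing the self-referential loop in Step~1: the width of $\mathrm{Ball}_t$ depends on $\sum_{i<t}r_i^{\,2}$, which in turn depends on past occurrences of $\mathcal{E}$. I would handle this by induction on $t$. Assuming $w_*\in\mathrm{Ball}_s$ for all $s\leq t$, the per-round bound from Step~2 yields $\sum_{s\leq t}r_s^{\,2}\leq \tfrac{4\beta_t}{(1-\rho)^2}\sum_{s\leq t}\|x_s\|_{\Sigma_s^{-1}}^{2}$, and the elliptical potential lemma bounds $\sum_{s\leq t}\min(1,\|x_s\|_{\Sigma_s^{-1}}^{2})\leq 2d\log(1+tC_b^2C_w^2/(d\sigma^2))$. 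Plugging this in, the misspecification contribution to $\|\hat w_{t+1}-w_*\|_{\Sigma_{t+1}}$ is at most $\tfrac{2\rho\sqrt{2\beta_t d\log(\cdot)}}{1-\rho}$, and Assumption~\ref{ass:rho} is calibrated precisely so that this is dominated by the slack built into $\sqrt{\beta_{t+1}}$, thereby preserving $w_*\in\mathrm{Ball}_{t+1}$ and completing the induction.

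Finally, on $\mathcal{E}$ I would sum: $R_T\leq 1+\sum_{t=1}^{T-1}r_t\leq 1+\tfrac{2\sqrt{\beta_{T-1}}}{1-\rho}\sum_{t=1}^{T-1}\|x_t\|_{\Sigma_t^{-1}}$ (the leading $1$ handling the trivially bounded first round via Assumption~\ref{ass:bound}), then Cauchy--Schwarz in $t$ followed by the elliptical potential lemma delivers exactly $R_T\leq 1+\sqrt{\tfrac{8(T-1)\beta_{T-1}d}{(1-\rho)^2}\log\!\bigl(1+\tfrac{TC_b^2C_w^2}{d\sigma^2}\bigr)}$, with the ``for all $T$ simultaneously'' statement coming from the fact that the good event $\mathcal{E}$ is defined uniformly in $t$.
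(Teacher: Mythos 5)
Your proposal is correct and follows essentially the same route as the paper: the same three-way decomposition of $\hat w_t - w_*$, the same self-bounding step $|w_*^\top x_t - f_0(x_t)|\leq \rho r_t$ giving $r_t \leq 2\sqrt{\beta_t}\|x_t\|_{\Sigma_t^{-1}}/(1-\rho)$, the same induction on $t$ to close the loop between the ball radius and $\sum_{i<t} r_i^2$ (with Assumption~\ref{ass:rho} calibrated to keep the recursion contractive), and the same Cauchy--Schwarz plus elliptical-potential finish. The only (harmless) deviation is in bounding the misspecification term of the confidence radius, where you invoke the projection bound $\|\sum_{i<t}x_i\Delta_i\|_{\Sigma_t^{-1}}\leq\sqrt{\sum_{i<t}\Delta_i^2}$ in place of the paper's double Cauchy--Schwarz combined with $\sum_{i<t}\|x_i\|^2_{\Sigma_t^{-1}}\leq d$; your version is in fact tighter by a factor of $\sqrt{d}$ but leads to the same conclusion.
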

\begin{remark}
The cumulative regret bound shows that LinUCB achieves $\tilde{O}(\sqrt{T})$ cumulative regret bound and thus it is a no-regret algorithm in $\rho$-gap-adjusted misspecified linear bandits. In contrast, LinUCB can only achieve $\tilde{O}(\sqrt{T} + \epsilon T)$ regret in uniformly misspecified linear bandits. Even if $\epsilon = \tilde{O}(1/\sqrt{\log T})$, the resulting regret $\tilde{O}(T/\sqrt{\log T})$ is still exponentially worse than ours.
\end{remark}
\begin{proof}
By definition of cumulative regret, function range absolute bound $F$, and Cauchy-Schwarz inequality,
\begin{align*}
R_T &= r_0 + \sum_{t=1}^{T-1} r_t \\
&\leq 1 + \sqrt{\left(\sum_{t=1}^{T-1} 1 \right) \left(\sum_{t=1}^{T-1} r^2_t \right)}\\
&= 1 + \sqrt{ (T-1) \sum_{t=1}^{T-1} r^2_t}.
\end{align*}
Observe that the choice of $\beta_t$ is monotonically increasing in $t$. Also by Lemma~\ref{lem:w_t}, we get that with probability $1-\delta$, $w^*\in \text{Ball}_t \forall t= 1,2,3,...$, which verifies the condition to apply Lemma \ref{lem:sos_r_t} simultaneously for all $T=1,2,3,...$, thereby completing the proof.
\end{proof}

\subsection{Regret Analysis}\label{sec:reg_ana}

The proof follows the LinUCB analysis. The main innovation is a self-bounding argument that controls the regret due to misspecification by the regret itself.  This appears in Lemma~\ref{lem:r_t} and then again in the proof of Lemma~\ref{lem:w_t}.

Before we proceed, let $\Delta_t$ denote the deviation term of our linear function from the true function at $x_t$, formally,
\begin{align}
\Delta_t = f_0(x_t) - w^\top_* x_t,\label{eq:delta}
\end{align}
And our observation model (eq. \eqref{eq:obs}) becomes
\begin{align}
y_t = f_0(x_t) + \eta_t = w_*^\top x_t + \Delta_t + \eta_t.\label{eq:obs2}
\end{align}
Moreover, we have the following lemma showing the property of deviation term $\Delta_t$.
\begin{lemma}[Bound of Deviation]\label{lem:delta}
$\forall t \in \{0,1,\ldots,T-1\}$,
\begin{align*}
|\Delta_t | \leq \frac{\rho}{1-\rho} w^\top_*(x_* - x_t).
\end{align*}
\begin{proof}
Recall the definition of deviation term in eq. \eqref{eq:delta}:
\begin{align*}
\Delta_t = f_0(x_t) - w_*^\top x_t.
\end{align*}
By Definition \ref{def:lm}, $\forall t \in \{0,1,\ldots,T-1\}$,
\begin{align*}
-\rho(f^* - f_0(x_t))\leq \Delta_t &\leq \rho(f^* - f_0(x_t))\\
-\rho(f^* - w_*^\top x_t - \Delta_t)\leq \Delta_t &\leq \rho(f^* - w_*^\top x_t - \Delta_t)\\
-\rho(w_*^\top x_* - w_*^\top x_t - \Delta_t)\leq \Delta_t &\leq \rho(w_*^\top x_* - w_*^\top x_t - \Delta_t)\\
\frac{-\rho}{1-\rho} (w_*^\top x_* - w_*^\top x_t)\leq \Delta_t &\leq \frac{\rho}{1 + \rho}(w_*^\top x_* - w_*^\top x_t),
\end{align*}
where the third line is by Proposition \ref{prop:perservation} and the proof completes by taking the absolute value of the lower and upper bounds.
\end{proof}
\end{lemma}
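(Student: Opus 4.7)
The plan is to use the $\rho$-GAM definition directly on $f_{w_*}$ evaluated at $x_t$, and then rewrite the suboptimality gap $f^* - f_0(x_t)$ so that $\Delta_t$ itself reappears. By Definition~\ref{def:lm} applied to $w_*$, we have $|f_{w_*}(x_t) - f_0(x_t)| \leq \rho(f^* - f_0(x_t))$. Since $\Delta_t = f_0(x_t) - w_*^\top x_t$ by definition, the left-hand side is exactly $|\Delta_t|$. On the right-hand side, I would invoke Proposition~\ref{prop:perservation}'s preservation of the max value to replace $f^*$ with $w_*^\top x_*$.

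Next, substituting $f_0(x_t) = w_*^\top x_t + \Delta_t$ inside the gap turns the bound into the two-sided inequality
\begin{align*}
-\rho\bigl(w_*^\top(x_* - x_t) - \Delta_t\bigr) \;\leq\; \Delta_t \;\leq\; \rho\bigl(w_*^\top(x_* - x_t) - \Delta_t\bigr).
\end{align*}
Now $\Delta_t$ appears on both sides, but this is harmless: solving the upper inequality for $\Delta_t$ yields $\Delta_t \leq \tfrac{\rho}{1+\rho} w_*^\top(x_* - x_t)$, while solving the lower inequality yields $\Delta_t \geq -\tfrac{\rho}{1-\rho} w_*^\top(x_* - x_t)$. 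Since $w_*^\top(x_* - x_t) \geq 0$ (because $x_*$ maximizes $f_{w_*}$, again by Proposition~\ref{prop:perservation}), both sides have the correct sign, and combining them under the larger factor $\rho/(1-\rho)$ gives the claimed absolute bound.

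There is no real obstacle here beyond the self-referential algebra: the asymmetry between $\rho/(1+\rho)$ and $\rho/(1-\rho)$ is unavoidable and simply reflects the direction in which $\Delta_t$ enters. The only thing worth flagging is that we need $\rho < 1$ (which is built into Definition~\ref{def:lm}) for the $1/(1-\rho)$ factor to make sense, and that we silently use preservation of both the maximum value \emph{and} the maximizer so that $f^* = w_*^\top x_*$ and the gap $w_*^\top(x_* - x_t)$ is non-negative.
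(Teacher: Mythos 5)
Your proposal is correct and follows essentially the same route as the paper's own proof: apply the $\rho$-GAM definition at $x_t$, substitute $f_0(x_t)=w_*^\top x_t+\Delta_t$ and $f^*=w_*^\top x_*$ via Proposition~\ref{prop:perservation}, and solve the resulting self-referential two-sided inequality to get the $\rho/(1+\rho)$ upper and $\rho/(1-\rho)$ lower factors. Your explicit remark that $w_*^\top(x_*-x_t)\geq 0$ is needed before combining the two bounds under the larger factor is a point the paper leaves implicit, but otherwise the arguments coincide.
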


Next, we prove instantaneous regret bound and its sum of squared regret version in the following two lemmas:

\begin{lemma}[Instantaneous Regret Bound]\label{lem:r_t}
Define $u_t := \| x_t\|_{\Sigma_t^{-1}}$, assume $w_*\in \mathrm{Ball}_t$
then for each $t\geq 1$
\begin{align*}
r_t \leq \frac{2\sqrt{\beta_t}u_t}{1-\rho}.
\end{align*}
\end{lemma}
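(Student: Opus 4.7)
The plan is to follow the usual LinUCB regret decomposition but absorb the misspecification penalty into the regret itself via the self-bounding inequality from Lemma~\ref{lem:delta}.

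First I would rewrite $r_t$ in terms of the best linear surrogate $w_*$. By Proposition~\ref{prop:perservation}, $f^* = \max_x f_{w_*}(x) = w_*^\top x_*$ and $x_*$ is also a maximizer of $f_{w_*}$, so $w_*^\top(x_* - x_t) \geq 0$. Combined with $f_0(x_t) = w_*^\top x_t + \Delta_t$, this gives
\begin{align*}
r_t = f^* - f_0(x_t) = w_*^\top(x_* - x_t) - \Delta_t.
\end{align*}

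Next I would apply Lemma~\ref{lem:delta} to get $|\Delta_t| \leq \tfrac{\rho}{1-\rho} w_*^\top(x_* - x_t)$, which together with the nonnegativity just established yields the self-bounding step
\begin{align*}
r_t \leq \Bigl(1 + \tfrac{\rho}{1-\rho}\Bigr) w_*^\top(x_* - x_t) = \tfrac{1}{1-\rho}\, w_*^\top(x_* - x_t).
\end{align*}
This is the key idea: the extra slack introduced by misspecification only inflates the ``clean'' linear regret by a factor $1/(1-\rho)$.

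Finally I would invoke optimism. Let $\tilde{w}_t \in \mathrm{Ball}_t$ attain $\max_{w \in \mathrm{Ball}_t} w^\top x_t$; since $(x_t, \tilde{w}_t)$ is the joint LinUCB maximizer and $w_* \in \mathrm{Ball}_t$ by hypothesis, $\tilde{w}_t^\top x_t \geq w_*^\top x_*$. Hence
\begin{align*}
w_*^\top(x_* - x_t) \leq (\tilde{w}_t - w_*)^\top x_t \leq \|\tilde{w}_t - w_*\|_{\Sigma_t}\, \|x_t\|_{\Sigma_t^{-1}} \leq 2\sqrt{\beta_t}\, u_t,
\end{align*}
where the middle step is the weighted Cauchy--Schwarz inequality and the last step uses the triangle inequality inside $\mathrm{Ball}_t$. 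Combining with the self-bounding inequality gives the claimed $r_t \leq 2\sqrt{\beta_t}\, u_t/(1-\rho)$.

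No step is technically hard here once Lemma~\ref{lem:delta} is in hand; the only subtle point is making sure we apply the self-bounding trick on the correct side of the inequality, which relies crucially on Proposition~\ref{prop:perservation} to ensure $w_*^\top(x_* - x_t) \geq 0$ so that the absolute-value bound on $\Delta_t$ can be added rather than subtracted.
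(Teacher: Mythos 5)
Your proof is correct and takes essentially the same route as the paper's: the paper applies the self-bounding step directly (writing $r_t \leq w_*^\top(x_*-x_t) + \rho r_t$ and rearranging), whereas you route the same algebra through Lemma~\ref{lem:delta}, and your final optimism/Cauchy--Schwarz step is exactly the paper's Lemma~\ref{lem:gap} reproved inline. The two derivations are equivalent, so there is nothing substantive to change.
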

\begin{proof}
By definition of instantaneous regret,
\begin{align*}
r_t &= f^* - f_0(x_t)\\
&= w^\top_* x_* - (w^\top_* x_t + \Delta(x_t))\\
&\leq w^\top_* x_* - w^\top_* x_t + \rho (f^* - f_0(x_t))\\
&= w^\top_* x_* - w^\top_* x_t + \rho r_t,
\end{align*}
where the inequality is by Definition \ref{def:lm}. Therefore, by rearranging the inequality we have
\begin{align*}
r_t &\leq \frac{1}{1-\rho}(w^\top_* x_* - w^\top_* x_t) \leq  \frac{2\sqrt{\beta_t} u_t}{1-\rho},
\end{align*}
where the last inequality is by Lemma \ref{lem:gap}.
\end{proof}

\begin{lemma}\label{lem:sos_r_t}
Assume $\beta_t$ is monotonically nondecreasing and $w_*\in \mathrm{Ball}_t$ for all $t=1,...,T-1$, then 
\begin{align*}
    \sum_{t=1}^{T-1} r^2_t \leq \frac{8\beta_{T-1} d}{(1-\rho)^2} \log \left( 1 + \frac{T C^2_b}{d \lambda }\right).
\end{align*}
\end{lemma}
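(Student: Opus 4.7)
The plan is to combine the per-step regret bound from Lemma~\ref{lem:r_t} with the standard elliptical potential (a.k.a.\ ``log-det'') lemma, after first clipping the regret by its trivial upper bound of $1$.

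First I would apply Lemma~\ref{lem:r_t} to obtain $r_t \leq \frac{2\sqrt{\beta_t}\, u_t}{1-\rho}$, and simultaneously use the fact that instantaneous regret is bounded by the range of $f_0$, so $r_t \leq 1$ by Assumption~\ref{ass:bound}. Combining these two bounds yields
\begin{align*}
r_t \leq \frac{2\sqrt{\beta_t}}{1-\rho}\min\!\left(\tfrac{1-\rho}{2\sqrt{\beta_t}},\,u_t\right),
\end{align*}
hence, using the monotonicity of $\beta_t$ together with the fact that $\beta_t$ is large enough to ensure $(1-\rho)^2/(4\beta_t)\leq 1$ (which follows from the form of $\beta_t$ in \eqref{eq:beta_t} and $\rho<1$),
\begin{align*}
r_t^2 \leq \frac{4\beta_t}{(1-\rho)^2}\min(1,u_t^2)\leq \frac{4\beta_{T-1}}{(1-\rho)^2}\min(1,u_t^2).
\end{align*}

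Next I would sum over $t=1,\ldots,T-1$ and invoke the elliptical potential lemma (the standard log-determinant inequality for self-normalized sequences, which I will cite from Appendix~\ref{app:tech} as a technical lemma). That lemma says
\begin{align*}
\sum_{t=1}^{T-1}\min(1,\|x_t\|_{\Sigma_t^{-1}}^2)\leq 2d\log\!\left(1+\frac{T C_b^2}{d\lambda}\right),
\end{align*}
which uses Assumption~\ref{ass:bound} (the bound $\|x_t\|_2\leq C_b$) and the recursive definition of $\Sigma_t$ in \eqref{eq:sigma_t}. Plugging this in gives exactly
\begin{align*}
\sum_{t=1}^{T-1} r_t^2 \leq \frac{8\beta_{T-1}\, d}{(1-\rho)^2}\log\!\left(1+\frac{T C_b^2}{d\lambda}\right),
\end{align*}
as claimed.

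The only subtlety (hardly an obstacle) is the clipping step: we need both $r_t\leq 1$ and $r_t\leq 2\sqrt{\beta_t}u_t/(1-\rho)$ because the elliptical potential lemma controls $\min(1,u_t^2)$ rather than $u_t^2$ itself (the latter can be arbitrarily large in early rounds). Once this is in place the argument is mechanical; the $(1-\rho)^{-2}$ factor is simply inherited from Lemma~\ref{lem:r_t}, which is where all of the misspecification-related work has already been done via the self-bounding trick.
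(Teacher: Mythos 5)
Your proof is correct and follows essentially the same route as the paper's: apply Lemma~\ref{lem:r_t}, pull $\beta_{T-1}$ out by monotonicity, and finish with the elliptical potential lemma to get the factor $2d\log(1+TC_b^2/(d\lambda))$. The only difference is that you clip via $r_t\le 1$ and use the $\min(1,u_t^2)$ form of the potential bound, whereas the paper invokes Lemma~\ref{lem:sum_pos}, which bounds $\sum_t u_t^2$ directly without clipping; your variant is, if anything, the more careful one, since the unclipped statement implicitly relies on $u_t^2<1$.
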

\begin{proof}
By definition $u_t = \sqrt{x^\top_t \Sigma^{-1}_{t} x_t}$ and Lemma \ref{lem:r_t},
\begin{align*}
\sum_{t=1}^{T-1} r^2_t &\leq \sum_{t=1}^{T-1} \frac{4}{(1-\rho)^2} \beta_t u^2_t \\
&\leq \frac{4\beta_{T-1}}{(1-\rho)^2} \sum_{t=1}^{T-1} u^2_t \leq \frac{4\beta_{T-1}}{(1-\rho)^2} \sum_{t=0}^{T-1} u^2_t\\
&\leq \frac{8\beta_{T-1} d}{(1-\rho)^2} \log \left( 1 + \frac{T C^2_b}{d \lambda }\right),
\end{align*}
where the second inequality is by the monotonic increasing property of $\beta_t$ and the last inequality uses the elliptical potential lemma (Lemma \ref{lem:sum_pos}).
\end{proof}

Previous two lemmas hold on the following lemma, bounding the gap between $f^*$ and the linear function value at $x_t$, shown below.

\begin{lemma}\label{lem:gap}
Define $u_t = \| x_t\|_{\Sigma_t^{-1}}$ and assume $\beta_t$ is chosen such that $w_*\in \mathrm{Ball}_t$. 
Then
\begin{align*}
w_*^\top (x_* - x_t) \leq 2 \sqrt{\beta_t} u_t.
\end{align*}
\end{lemma}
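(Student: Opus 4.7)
The plan is to follow the standard LinUCB optimism-plus-confidence-width argument, adapted to our setting via Proposition~\ref{prop:perservation}. The only place where the $\rho$-GAM condition enters is to certify that the optimal action $x_*$ of $f_0$ is also an optimizer of the linear function $f_{w_*}$ with value $w_*^\top x_* = f^*$; once this is established, the proof is purely geometric and makes no further use of misspecification.

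First, I would apply Proposition~\ref{prop:perservation} (preservation of maximizers and of max value) to conclude $w_*^\top x_* = \max_{x \in \cX} w_*^\top x = f^*$. Next, writing LinUCB's selection rule jointly over action and parameter, let $\tilde{w}_t \in \mathrm{Ball}_t$ be a maximizer attaining $x_t = \argmax_{x \in \cX} \max_{w \in \mathrm{Ball}_t} w^\top x$, so that $\tilde{w}_t^\top x_t = \max_{x \in \cX, w \in \mathrm{Ball}_t} w^\top x$. Since $w_* \in \mathrm{Ball}_t$ by assumption and $x_* \in \cX$, the pair $(x_*, w_*)$ is feasible in the inner maximization, which gives the optimism inequality $\tilde{w}_t^\top x_t \geq w_*^\top x_*$.

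Then I would decompose
\begin{align*}
w_*^\top(x_* - x_t) \leq \tilde{w}_t^\top x_t - w_*^\top x_t = (\tilde{w}_t - w_*)^\top x_t,
\end{align*}
and apply the generalized Cauchy--Schwarz inequality in the $\Sigma_t$-norm to obtain
\begin{align*}
(\tilde{w}_t - w_*)^\top x_t \leq \|\tilde{w}_t - w_*\|_{\Sigma_t} \cdot \|x_t\|_{\Sigma_t^{-1}} = \|\tilde{w}_t - w_*\|_{\Sigma_t} \cdot u_t.
\end{align*}
Since both $\tilde{w}_t$ and $w_*$ lie in $\mathrm{Ball}_t$, a triangle inequality through the center $\hat{w}_t$ gives $\|\tilde{w}_t - w_*\|_{\Sigma_t} \leq \|\tilde{w}_t - \hat{w}_t\|_{\Sigma_t} + \|\hat{w}_t - w_*\|_{\Sigma_t} \leq 2\sqrt{\beta_t}$, yielding the claimed bound $w_*^\top(x_* - x_t) \leq 2\sqrt{\beta_t}\, u_t$.

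There is essentially no hard step here; the only subtlety is recognizing that without realizability we cannot directly say $f^* = w_*^\top x_*$, which is precisely what Proposition~\ref{prop:perservation} salvages under $\rho$-GAM. The rest is the textbook ellipsoidal confidence argument and does not interact with $\rho$ at all, which is why the $\rho$-dependence in the regret analysis is isolated to Lemma~\ref{lem:r_t}'s self-bounding step rather than to this geometric bound.
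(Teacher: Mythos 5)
Your proof is correct and follows essentially the same route as the paper's: the optimism inequality $w_*^\top x_* \leq \tilde{w}_t^\top x_t$ from feasibility of $(x_*, w_*)$ in LinUCB's joint maximization, followed by splitting $\tilde{w}_t - w_*$ through the center $\hat{w}_t$ and applying Cauchy--Schwarz with the ball radius $\sqrt{\beta_t}$ twice. The only cosmetic difference is that your opening appeal to Proposition~\ref{prop:perservation} is not actually needed for this particular lemma (the statement concerns only $w_*^\top(x_*-x_t)$, and optimism requires only that the pair $(x_*,w_*)$ be feasible); that proposition enters the argument in Lemmas~\ref{lem:delta} and~\ref{lem:r_t} instead.
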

\begin{proof}
Let $\tilde{w}$ denote the parameter that achieves $\argmax_{w \in \mathrm{Ball}_t} w^\top x_t$, by the optimality of $x_t$, 
\begin{align*}
w_*^\top x_* - w^\top_* x_t &\leq \tilde{w}^\top x_t - w^\top_* x_t \\
&= (\tilde{w} - \hat{w}_t + \hat{w}_t - w_*)^\top x_t\\
&\leq \|w_* - \hat{w}_t\|_{\Sigma_t} \|x_t\|_{\Sigma^{-1}_t} + \|\hat{w}_t - w_*\|_{\Sigma_t} \|x_t\|_{\Sigma^{-1}_t}\\
&\leq 2\sqrt{\beta_t} u_t
\end{align*}
where the second inequality applies Holder's inequality; the last line uses the definition of $\mathrm{Ball}_t$ (note that both $w_*,\tilde{w}\in \mathrm{Ball}_t).$
\end{proof}

\subsection{Confidence Analysis}\label{sec:conf_ana}
All analysis in the previous section requires $w_* \in \mathrm{Ball}_t, \forall t\in [T]$. In this section, we show that our choice of $\beta_t$ in \eqref{eq:beta_t} is valid and $w_*$ is trapped in the uncertainty set $\mathrm{Ball}_t$ with high probability.

\begin{lemma}[Feasibility of $\mathrm{Ball}_t$]\label{lem:w_t}
Suppose Assumptions \ref{ass:bound}, \ref{ass:one}, and \ref{ass:rho} hold. Set $\beta_t$ as in eq. \eqref{eq:beta_t}. Then, w.p. $> 1- \delta$,
\begin{align*}
\|w_* - \hat{w}_t\|^2_{\Sigma_t} \leq \beta_t, \forall t=1,2,...
\end{align*}
\end{lemma}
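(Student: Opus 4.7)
The strategy is the standard ridge-regression decomposition, but with an inductive self-bounding argument to control the misspecification bias. Substituting the observation model~\eqref{eq:obs2} into the closed-form ridge estimate~\eqref{eq:w_t_opt} gives
\begin{align*}
\hat{w}_t - w_* \;=\; \Sigma_t^{-1}\sum_{i=0}^{t-1} x_i \eta_i \;+\; \Sigma_t^{-1}\sum_{i=0}^{t-1} x_i \Delta_i \;-\; \lambda\, \Sigma_t^{-1} w_*.
\end{align*}
Taking $\|\cdot\|_{\Sigma_t}$ and applying the triangle inequality isolates three pieces: a stochastic \emph{noise} term $\|\sum_i x_i \eta_i\|_{\Sigma_t^{-1}}$, a deterministic \emph{misspecification} term $\|\sum_i x_i \Delta_i\|_{\Sigma_t^{-1}}$, and a \emph{ridge} penalty $\lambda\|\Sigma_t^{-1} w_*\|_{\Sigma_t}$. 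The ridge term is harmless: $\lambda^2 w_*^\top \Sigma_t^{-1} w_* \leq \lambda \|w_*\|_2^2 \leq \lambda C_w^2 = \sigma^2$. The noise term is handled by the Abbasi-Yadkori self-normalized martingale inequality (a standard result in the technical appendix), giving a uniform-in-$t$ high-probability bound of order $d\log(1 + t C_b^2 C_w^2/(d\sigma^2)) + \log(1/\delta)$; the extra $\log(\pi^2 t^2/\delta)$ in $\beta_t$ from~\eqref{eq:beta_t} comes from weighting a union bound over $t$ with $6\delta/(\pi^2 t^2)$ to obtain simultaneous coverage.

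The interesting step is the misspecification term. Because $\Sigma_t \succeq X_t^\top X_t$ where $X_t$ is the design matrix, the spectrum of $X_t \Sigma_t^{-1} X_t^\top$ lies in $[0,1]$, hence
\begin{align*}
\Big\|\sum_{i=0}^{t-1} x_i \Delta_i\Big\|_{\Sigma_t^{-1}}^2 \;=\; \Delta^\top X_t \Sigma_t^{-1} X_t^\top \Delta \;\leq\; \sum_{i=0}^{t-1} \Delta_i^2.
\end{align*}
The self-bounding clause of Proposition~\ref{prop:perservation} applied to each $\Delta_i$ then yields $\sum_i \Delta_i^2 \leq \rho^2 \sum_i r_i^2$. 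The goal is to convert this into a bound in terms of $\beta_{t-1}$ via Lemma~\ref{lem:sos_r_t}; but that lemma requires $w_* \in \mathrm{Ball}_s$ for all $s<t$, which is exactly what we are trying to prove. Hence the argument must be inductive.

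Induction proceeds as follows. Condition on the good event of the self-normalized noise inequality, which alone carries the $1-\delta$ probability. The base case $t=1$ is immediate since $\mathrm{Ball}_0 = \cW$. For the inductive step, assume $w_* \in \mathrm{Ball}_s$ for all $s \leq t-1$; then Lemma~\ref{lem:sos_r_t} applies up to time $t-1$, giving
\begin{align*}
\Big\|\sum_{i=0}^{t-1} x_i \Delta_i\Big\|_{\Sigma_t^{-1}}^2 \;\leq\; \rho^2 \cdot \frac{8 d\, \beta_{t-1}}{(1-\rho)^2} \log\!\Big(1 + \tfrac{T C_b^2 C_w^2}{d\sigma^2}\Big).
\end{align*}
Under Assumption~\ref{ass:rho}, $\rho^2$ absorbs the $d^2 \log$ factor so that this contribution is at most a small constant fraction of $\beta_{t-1} \leq \beta_t$. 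Summing the squared noise, ridge, and misspecification pieces (via $(a+b+c)^2 \leq 3(a^2+b^2+c^2)$) and comparing to~\eqref{eq:beta_t} shows $\|\hat{w}_t - w_*\|_{\Sigma_t}^2 \leq \beta_t$, closing the induction.

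\textbf{Main obstacle.} The crux is exactly this self-bounding closure: the misspecification bound depends on $\beta_{t-1}$, while the claim asks for $\|\hat{w}_t - w_*\|_{\Sigma_t}^2 \leq \beta_t$. Assumption~\ref{ass:rho} is tuned precisely so that the multiplicative coefficient $\rho^2 \cdot 8 d^2 \log(\cdot)/(1-\rho)^2$ is strictly below $1$, leaving slack for the noise and ridge contributions; this is what forces the $1/(d\sqrt{\log T})$ scaling. A secondary technicality is propagating high-probability statements simultaneously over all $t$, which is why $\beta_t$ carries the $\log(\pi^2 t^2/(3\delta))$ summand.
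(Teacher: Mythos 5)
Your proof follows essentially the same route as the paper's: the same closed-form ridge decomposition into noise, misspecification, and regularization terms, the same self-normalized martingale bound with a $\sum_t 1/t^2$ union bound, and the same induction that closes the self-bounding loop by invoking Lemma~\ref{lem:sos_r_t} under the inductive hypothesis and using Assumption~\ref{ass:rho} to keep the coefficient of $\beta_{t-1}$ below a constant fraction. The only substantive deviation is that you bound the misspecification term by noting that the matrix $X_t \Sigma_t^{-1} X_t^\top$ (with $X_t$ the design matrix) is $\preceq I$, giving $\bigl\|\sum_i x_i \Delta_i\bigr\|^2_{\Sigma_t^{-1}} \leq \sum_i \Delta_i^2$ directly, whereas the paper applies Cauchy--Schwarz twice together with $\sum_i \|x_i\|^2_{\Sigma_t^{-1}} \leq d$ (Lemma~\ref{lem:sum_pos2}) and so pays an extra factor of $d$; your version is correct and in fact slightly tighter.
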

\begin{proof}
By setting the gradient of objective function in eq. \eqref{eq:w_t_opt} to be $0$, we obtain the closed form solution of eq. \eqref{eq:w_t_opt}:
\begin{align*}
\hat{w}_t = \Sigma_t^{-1} \sum_{i=0}^{t-1} y_i x_i.
\end{align*}
Therefore,
\begin{align}
\hat{w}_t - w_* &= - w_* + \Sigma_t^{-1} \sum_{i=0}^{t-1} x_i y_i \nonumber\\
&= - w_* + \Sigma_t^{-1} \sum_{i=0}^{t-1} x_i (x_i^\top w_* + \eta_i + \Delta_i) \nonumber\\
&= -w_* + \Sigma^{-1}_t \left(\sum_{i=0}^{t-1} x_i x_i^\top \right) w_* + \Sigma^{-1}_t \sum_{i=0}^{t-1} \eta_i x_i + \Sigma^{-1}_t \sum_{i=0}^{t-1} \Delta_i x_i,\label{eq:w_t_1}
\end{align}
where the second equation is by eq. \ref{eq:obs2} and the first two terms of eq. \eqref{eq:w_t_1} can be further simplified as
\begin{align*}
-w_* + \Sigma^{-1}_t \left(\sum_{i=0}^{t-1} x_i x_i^\top \right) w_* &= -w_* + \Sigma^{-1}_t \left(\lambda I + \sum_{i=0}^{t-1} x_i x_i^\top - \lambda I \right) w_*\\
&= - w_* + \Sigma_t^{-1} \Sigma_t w_* - \lambda \Sigma_t^{-1} w_*\\
& = - \lambda \Sigma^{-1}_t w_*,
\end{align*}
where the second equation is by definition of $\Sigma_t$ (eq. \eqref{eq:sigma_t}). Therefore, eq. \eqref{eq:w_t_1} can be rewritten as
\begin{align*}
\hat{w}_t - w_* = - \lambda \Sigma^{-1}_t w_*  + \Sigma^{-1}_t \sum_{i=0}^{t-1} \eta_i x_i  + \Sigma^{-1}_t \sum_{i=0}^{t-1} \Delta_i x_i.
\end{align*}
Multiply both sides by $\Sigma_t^{\half}$ and we have
\begin{align*}
\Sigma_t^{\half}(\hat{w}_t - w_*) &= - \lambda \Sigma^{-\half}_t w_* + \Sigma_t^{-\half} \sum_{i=0}^{t-1} \eta_i x_i + \Sigma^{-\half}_t \sum_{i=0}^{t-1} \Delta_i x_i.
\end{align*}
Take a square of both sides and apply generalized triangle inequality, we have
\begin{align}
\|\hat{w}_t - w_*\|^2_{\Sigma_t} & \leq 4 \lambda^2 \|w_*\|^2_{\Sigma_t^{-1}} + 4\left\| \sum_{i=0}^{t-1} \eta_i x_i \right\|^2_{\Sigma_t^{-1}} + 4\left\| \sum_{i=0}^{t-1} \Delta_i x_i \right\|^2_{\Sigma_t^{-1}}.\label{eq:w_t_2}
\end{align}
The remaining task is to bound these three terms separately. The first term of eq. \eqref{eq:w_t_2} is bounded as
\begin{align*}
4\lambda^2 \|w_*\|^2_{\Sigma^{-1}_t} \leq 4 \lambda \|w_*\|^2_2 \leq 4\sigma^2,
\end{align*}
where the first inequality is by definition of $\Sigma_t$ and $\|\Sigma^{-1}_t\|_\mathrm{op} \leq 1/\lambda$ and the second inequality is by choice of $\lambda = \sigma^2/C^2_w$.

The second term of eq. \eqref{eq:w_t_2} can be bounded by Lemma \ref{lem:self_norm} and Lemma \ref{lem:potential}:
\begin{align*}
4 \left\|\sum_{i=0}^{t-1} \eta_i x_i \right\|^2_{\Sigma_t^{-1}} &\leq 4\sigma^2 \log \left(\frac{\det (\Sigma_t) \det(\Sigma_0)^{-1}}{\delta_t^2} \right)\\
&\leq 4\sigma^2 \left(d \log\left(1 + \frac{t C^2_b}{d \lambda} \right) - \log \delta^2_t \right),
\end{align*}
where $\delta_t$ is chosen as $3\delta/(\pi^2 t^2)$ so that the total failure probabilities over $T$ rounds can always be bounded by $\delta/2$:
\begin{align*}
\sum_{t=1}^T \frac{3\delta}{\pi^2 t^2} < \sum_{t=1}^\infty \frac{3\delta}{\pi^2 t^2} = \frac{3\delta \pi^2 }{6 \pi^2} = \frac{\delta}{2}.
\end{align*}

And the third term of eq. \eqref{eq:w_t_2} can be bounded as
\begin{align*}
4 \left \| \sum_{i=0}^{t-1} \Delta_i x_i \right\|^2_{\Sigma^{-1}_t} &= 4\left(\sum_{i=0}^{t-1} \Delta_i x_i \right)^\top \Sigma^{-1}_t \left(\sum_{j=0}^{t-1} \Delta_j x_j \right)\\
&= 4 \sum_{i=0}^{t-1} \sum_{j=0}^{t-1} \Delta_i \Delta_j x_i \Sigma^{-1}_t x_j\\
&\leq 4\sum_{i=0}^{t-1} \sum_{j=0}^{t-1} |\Delta_i| |\Delta_j| \|x_i\|_{\Sigma^{-1}_t} \|x_j\|_{\Sigma^{-1}_t},
\end{align*}
where the last line is by taking the absolute value and Cauchy-Schwarz inequality. Continue the proof and we have
\begin{align*}
4\sum_{i=0}^{t-1} \sum_{j=0}^{t-1} |\Delta_i| |\Delta_j| \|x_i\|_{\Sigma^{-1}_t} \|x_j\|_{\Sigma^{-1}_t} &= 4\left( \sum_{i=0}^{t-1} |\Delta_i|  \|x_i\|_{\Sigma^{-1}_t}\right) \left(\sum_{j=0}^{t-1} |\Delta_j| \|x_j\|_{\Sigma^{-1}_t}\right)\\
&= 4\left( \sum_{i=0}^{t-1} |\Delta_i|  \|x_i\|_{\Sigma^{-1}_t}\right)^2\\
&\leq 4 \left(\sum_{i=0}^{t-1} |\Delta_i|^2 \right) \left(\sum_{i=0}^{t-1} \|x_j\|_{\Sigma^{-1}_t}^2 \right)\\
&\leq 4 d \rho^2 \sum_{i=0}^{t-1} r_i^2 .
\end{align*}
where the first inequality is due to Cauchy-Schwarz inequality and the second uses  the self-bounding properties $|\Delta_i| \leq \rho r_i$ from Proposition~\ref{prop:perservation} and Lemma~\ref{lem:sum_pos2}.

To put things together, we have shown that w.p. $> 1-\delta$, for any $t\geq 1$,
\begin{align}
\|\hat{w}_t-w_*\|^2_{\Sigma_t^{-1}} &\leq 4 \sigma^2 + 4\sigma^2 \left(d\log\left(1+ \frac{t C^2_b }{d \lambda} \right) + 2\log \left(\frac{\pi^2 t^2}{3\delta} \right)\right) + 4\rho^2 d \sum_{i=0}^{t-1} r_i^2, \label{eq:radius}
\end{align}
where we condition on \eqref{eq:radius} for the rest of the proof.

Observe that this implies that the feasibility of $w_*$ in $\mathrm{Ball}_t$ can be enforced if we choose $\beta_t$ to be larger than \eqref{eq:radius}. The feasiblity of $w_*$ in turn allows us to apply Lemma~\ref{lem:r_t} to bound the RHS with $\beta_{0},...,\beta_{t-1}$. We will use induction to prove that our choice 
$$\beta_t := 2\sigma^2\iota_t \text{ for } t=1,2,...$$ is valid, where short hand $$\iota_t:=4 + 4\left(d\log\left(1+ \frac{t C^2_b }{d \lambda} \right) + 2\log \left(\frac{\pi^2 t^2}{3\delta} \right)\right).$$

For the base case $t=1$, by eq. \eqref{eq:radius} and the definition of $\beta_1$ we directly have $\|\hat{w}_1-w_*\|^2_{\Sigma_1^{-1}}\leq \beta_1$. Assume our choice of $\beta_i$ is feasible for $i=1,...,t-1$, then we can write
\begin{align*}
    \|\hat{w}_t-w_*\|^2_{\Sigma_t^{-1}} &\leq \sigma^2\iota_t + 4\rho^2 d \sum_{i=1}^{t-1} \beta_i u_i^2 \\
    &\leq  \sigma^2\iota_t + 4\rho^2 d \beta_{t-1}\sum_{i=1}^{t-1} u_i^2,
    \end{align*}
where the second line is due to non-decreasing property of $\beta_t$. Then by Lemma \ref{lem:sum_pos} and Assumption~\ref{ass:rho}, we have
\begin{align}
\|\hat{w}_t-w_*\|^2_{\Sigma_t^{-1}}    &\leq \sigma^2\iota_t +8\rho^2 d^2 \beta_{t-1}\log \left(1+\frac{tC_b^2}{d\lambda} \right) \nonumber\\
    &\leq \sigma^2\iota_t + \half \beta_{t-1} \leq 2\sigma^2\iota_t = \beta_{t},
    \label{eq:known_rho}
\end{align}

The critical difference from the standard LinUCB analysis here is that if $\beta_{t-1}$ appears on the LHS of the bound and if its coefficient is larger, any valid bound for $\beta_t$ will have to grow exponentially in $t$. This is where Assumption \ref{ass:rho} helps us. Assumption \ref{ass:rho}  ensures that the coefficient of $\beta_{t-1}$ is smaller than $1/2$, so we can take $\beta_{t-1}\leq \beta_t$ and move $\beta_t/2$ to the right-hand side. 
\end{proof}

Proof of previous lemma needs Lemma \ref{lem:sum_pos2} and \ref{lem:sum_pos} in Appendix \ref{app:tech}.

\section{Results of Phased Elimination Algorithm}\label{sec:pe}

In this section, we present theoretical results of Algorithm \ref{alg:pe} on misspecified linear bandits under Assumption \ref{ass:rho2}, including the standard cumulative regret analysis in Section \ref{sec:main} and gap-dependent regret analysis in Section \ref{sec:gap}.

\subsection{Main Regret Analysis}\label{sec:main}

\begin{theorem}\label{thm:main}
Suppose Assumptions \ref{ass:bound}, \ref{ass:one}, \& \ref{ass:rho2} hold and $\alpha = 1/(kT)$. Then Algorithm \ref{alg:pe} guarantees $\forall \ T \geq 1$,
\begin{align*}
R_T &\leq O\left((1+ \rho)\sqrt{d T \log (kT)}\right).
\end{align*}
\end{theorem}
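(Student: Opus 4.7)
I will index phases by $\ell = 1, 2, \ldots, L$ with batch size $m_\ell = 2^{\ell-1}\lceil 4d\log\log d\rceil + 16$, and let $\cX_\ell$ denote the active set at the start of phase $\ell$ with $\cX_1 = \cX$. Define $E_\ell$ to be an upper bound (to be chosen) on $\max_{x \in \cX_{\ell+1}} r(x)$ where $r(x) = f^* - f_0(x)$. The plan is to prove by induction on $\ell$ that (i) $x^* \in \cX_\ell$ for every $\ell$, and (ii) every $x \in \cX_\ell$ satisfies $r(x) \leq E_{\ell-1}$ with $E_\ell = O\bigl((1+\rho)\sqrt{(d/m_\ell)\log(1/\alpha)}\bigr)$. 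Granted this, regret incurred in phase $\ell$ is at most $m_\ell \cdot E_{\ell-1} = O\bigl((1+\rho)\sqrt{d\,m_\ell \log(1/\alpha)}\bigr)$, and since $m_\ell$ doubles, summing the geometric sequence gives $R_T = O\bigl((1+\rho)\sqrt{dT\log(1/\alpha)}\bigr)$, which equals the stated bound upon substituting $\alpha = 1/(kT)$.

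The core of the argument is the concentration analysis inside a single phase. Using the observation model $y_s = w_*^\top x_s + \Delta_s + \eta_s$ from \eqref{eq:obs2}, the closed-form estimator \eqref{eq:hat_w} satisfies
\begin{align*}
(\hat{w} - w_*)^\top x \;=\; x^\top G^{-1}\sum_{s=1}^u x_s \eta_s \;+\; x^\top G^{-1}\sum_{s=1}^u x_s \Delta_s.
\end{align*}
For the noise term, the G-optimal design property $g(\pi) \leq 2d$ yields $\|x\|_{G^{-1}}^2 \leq 2d/m$ for every $x \in \cX$, and sub-Gaussian concentration with a union bound over the $k$ arms (absorbed in $\alpha = 1/(kT)$) gives the standard bound $\sigma\sqrt{(2d/m)\log(2k/\alpha)}$. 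For the misspecification term, I apply Cauchy--Schwarz twice to get
\begin{align*}
\bigl|x^\top G^{-1}\textstyle\sum_s x_s \Delta_s\bigr| \;\leq\; \|x\|_{G^{-1}} \cdot \sqrt{2d/m}\cdot \textstyle\sum_a u(a)|\Delta(a)| \;\leq\; \frac{2d}{m}\cdot \rho \textstyle\sum_a u(a)\,r(a),
\end{align*}
where the last step uses the self-bounding property from Proposition~\ref{prop:perservation}. This is precisely the ``new inductive lemma'' promised in the introduction: since every action sampled in phase $\ell$ belongs to $\cX_\ell$ and hence has $r(a)\leq E_{\ell-1}$ by the inductive hypothesis, the misspecification bias is bounded by $2d\rho E_{\ell-1}$.

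Combining these two bounds, and using Assumption~\ref{ass:rho2} with $\rho \leq 1/(16\sqrt d)$ to keep the misspecification contribution strictly dominated by the noise-level threshold $16\sqrt{(d/m)\log(1/\alpha)}$ used in Step~6 of Algorithm~\ref{alg:pe}, I can show: (a) $x^*$ survives phase $\ell$, because $\max_b \hat{w}^\top(b-x^*) \leq w_*^\top(x_{\max}^{\text{est}} - x^*) + 2\cdot(\text{width}) \leq 16\sqrt{(d/m_\ell)\log(1/\alpha)}$ using that $w_*^\top x^* = f^*$ is maximal in the linear surrogate (Proposition~\ref{prop:perservation}); and (b) every surviving $x$ has $w_*^\top(x^* - x) \leq 2\cdot(\text{width})$, which by the self-bounding property also translates into $r(x) \leq \frac{1}{1-\rho}w_*^\top(x^*-x) \leq E_\ell$ as needed to close the induction. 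The base case is handled by $E_0 \leq 1$, which holds because Assumption~\ref{ass:bound} gives $|f_0(x) - f_0(\tilde x)| \leq 1$.

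The main obstacle I anticipate is the inductive argument for the misspecification error: unlike uniform misspecification where $|\Delta_s|\leq \epsilon$ is automatic, here the bound on $|\Delta_s|$ depends on the true gap $r(x_s)$, which is itself what we are trying to control. The circular dependency is broken by the phased structure: the arms sampled in phase $\ell$ all survived phase $\ell-1$, so their \emph{true} gaps are controlled by a quantity from the \emph{previous} phase, which is proved by induction. Ensuring the induction closes requires that $\rho\sqrt d$ be a sufficiently small constant, which is exactly the role played by the $1/(16\sqrt d)$ threshold in Assumption~\ref{ass:rho2} and the matching constant $16$ in Step~6 of Algorithm~\ref{alg:pe}.
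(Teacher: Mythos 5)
Your outline follows the same route as the paper's proof: a per-phase induction in which every arm sampled in phase $\ell$ survived phase $\ell-1$, so its true gap --- and hence, by the self-bounding property of Proposition~\ref{prop:perservation}, its misspecification error --- is controlled by the previous phase's elimination threshold; the G-optimal design then controls the noise, $x_*$ is shown to survive each phase, and the geometric sum over doubling batches gives the $\sqrt{T}$ regret. This is exactly the paper's Steps 1--3.

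There is, however, one step that fails quantitatively. You bound the misspecification bias by applying Cauchy--Schwarz to each summand separately,
\[
\Bigl|x^\top G^{-1}\textstyle\sum_{s}x_s\Delta_s\Bigr|\;\le\;\sum_s|\Delta_s|\,\|x\|_{G^{-1}}\|x_s\|_{G^{-1}}\;\le\;\frac{2d}{m}\,\rho\sum_a u(a)\,r(a)\;=\;\Theta\!\left(d\,\rho\,E_{\ell-1}\right),
\]
since $\sum_a u(a)=u=\Theta(m)$. With $E_{\ell-1}=\Theta\bigl(\sqrt{(d/m)\log(1/\alpha)}\bigr)$ and $\rho\le 1/(16\sqrt d)$, this is $\Theta\bigl(\sqrt{d}\cdot\sqrt{(d/m)\log(1/\alpha)}\bigr)$, which exceeds the elimination threshold $16\sqrt{(d/m)\log(1/\alpha)}$ by a factor of $\sqrt d$; your induction therefore does not close under Assumption~\ref{ass:rho2}, and would only close under the stronger requirement $\rho=O(1/d)$, which is not what Theorem~\ref{thm:main} assumes. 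The paper's computation in eq.~\eqref{eq:tmp_1} avoids this loss by first pulling out $\max_{a}|\xi_a|$ and applying Cauchy--Schwarz to the \emph{whole} sum: $\sum_s|x^\top G^{-1}x_s|\le\sqrt{u\sum_s(x^\top G^{-1}x_s)^2}=\sqrt{u\,\|x\|^2_{G^{-1}}}\le\sqrt{2du/m}=O(\sqrt d)$, so the misspecification contribution is of order $\sqrt{d}\,\rho\,E_{\ell-1}$, which is a small constant multiple of the threshold precisely when $\rho\le 1/(16\sqrt d)$. Replacing your termwise Cauchy--Schwarz with this aggregated version repairs the argument; the remaining components of your proposal (survival of $x_*$, the suboptimality bound for survivors via the self-bounding inequality $r(x)\le\frac{1}{1-\rho}w_*^\top(x_*-x)$, the base case from Assumption~\ref{ass:bound}, and the final summation) match the paper's proof.
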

\begin{remark}
The cumulative regret bound shows that Algorithm \ref{alg:pe} achieves $\tilde{O}((1+\rho)\sqrt{T})$ regret and thus it is a no-regret algorithm under the $\rho$-GAM condition. In contrast, it achieves $\tilde{O}(\sqrt{T} +\epsilon T )$ regret under the uniform misspecification. Compared with the previous section, the improvement lies in Assumption \ref{ass:rho2} where $\rho$ can be a constant.
\end{remark}

\begin{proof}
Recall that Proposition \ref{prop:perservation} shows that if a function $f$ is the $\rho$-GAM approximation of $f_0$, it has the same maximizer and maximum function value as $f_0$. Also, the misspecification error can be upper bounded by suboptimality gap at $x \in \cX$.

And we rewrite the observation model as follows.
\begin{align}
y_t &= f_0(x_t) + \eta_t \nonumber\\
&= f(x_t) + \xi_t + \eta_t \nonumber \\
&= w^\top_* x_t + \xi_t + \eta_t,\label{eq:y_t}
\end{align}
where $\xi_t$ denotes the missspecification error and $f(x_t)$ is the linear approximation function.

Next, the proof has three steps.

\textbf{Step 1: Confidence analysis.}
According to the algorithm, $m_i = (\lceil 4d \log \log (d)\rceil +16)2^{i-1}$. In Step 1, we need to prove that after $m_i$, $x_*$ is not eliminated and $\forall x \in \cX$, 
\begin{align}
f^* - f_0(x) \leq 16\zeta \sqrt{\frac{d}{m_i}\log \left( \frac{1}{\alpha}\right)},\label{eq:goal_1}
\end{align}
where $\zeta$ is a constant which will be specified later.

First, we check if eq. \eqref{eq:goal_1} holds for $i=1$. By assumption \ref{ass:bound}, it means that
\begin{align*}
1 \leq 16\zeta \sqrt{\frac{d}{\lceil 4d \log \log (d)\rceil +16}\log \left( \frac{1}{\alpha}\right)},
\end{align*}
which requires
\begin{align}
\zeta \geq \frac{1}{16 \sqrt{\frac{d}{\lceil 4d \log \log (d)\rceil +16}\log \left( \frac{1}{\alpha}\right)}}. \quad \textbf{(Condition 1)} \label{eq:condition_1}
\end{align}
Next we assume the eq. \eqref{eq:goal_1} holds for all episodes $1,..., i-1$ then after episode $i$, $\forall b \in \cX$,
\begin{align}
|b^\top(\hat{w} - w_*)| &= \left | b^\top \left ( G^{-1} \sum_{s=1}^u x_s y_s \right) - b^\top w_*\right |\nonumber \\
&= \left | b^\top \left ( G^{-1} \sum_{s=1}^u x_s (x_s^\top w_* + \xi_s + \eta_s) \right) - b^\top w_*\right |\nonumber\\
&= \left | b^\top G^{-1} \sum_{s=1}^u x_s \xi_s + b^\top G^{-1} \sum_{s=1}^u x_s \eta_s \right |\nonumber \\
&\leq \left | b^\top G^{-1} \sum_{s=1}^u x_s \xi_s \right| + \left| b^\top G^{-1} \sum_{s=1}^u x_s \eta_s \right |,\label{eq:2_part}
\end{align}
where the first line is by estimation of $\hat{w}$ (eq. \eqref{eq:hat_w}), the second line is due to observation model (eq. \eqref{eq:y_t}, and the last line is by triangular inequality.

Then the first and second terms in eq. \eqref{eq:2_part} need to be bounded separately. By extracting the misspecification term out and taking the maximum of it over all $x \in \cX$, the first term of eq. \eqref{eq:2_part} is bounded as follows.
\begin{align}
\left | b^\top G^{-1} \sum_{s=1}^u x_s \xi_s \right| &\leq \max_{x \in \cX} |\xi_x| \sum_{s =1}^u |b^\top G^{-1} x_s| \nonumber\\
&\leq \max_{x \in \cX} |\xi_x| \sqrt{\sum_{s =1}^u b^\top \sum_{s'=1}^u G^{-1} x_s x^\top_{s'} G^{-1} b } \nonumber\\
&= \max_{x \in \cX} |\xi_x| \sqrt{\sum_{s =1}^u \|b\|^2_{G^{-1}}} \nonumber\\
&\leq \max_{x \in \cX} |\xi_x| \sqrt{\frac{2du}{m_{i}}} \nonumber\\
&\leq \sqrt{2d} \max_{x \in \cX} |\xi_x|, \label{eq:tmp_1}
\end{align}
where the second inequality is due to Jensen's inequality, the second last inequality is by the property of $G$ such that $\|b\|^2_{G^{-1}} \leq 2d/m_i$, and the last inequality is due to calculation of $u(x)$ in Algorithm \ref{alg:pe}. By Proposition \ref{prop:perservation} and assumption in eq. \eqref{eq:goal_1}, we have
\begin{align*}
    |\xi_x| \leq \rho(f^* - f_0(x)) \leq 16 \rho \zeta \sqrt{\frac{2d}{m_i} \log \left(\frac{1}{\alpha}\right)}.
\end{align*}
So eq. \eqref{eq:tmp_1} can be further upper bounded as
\begin{align}
\sqrt{2d} \max_{x \in \cX} |\xi_x| &\leq 16 \sqrt{2d} \rho \zeta \sqrt{\frac{2d}{m_i} \log \left(\frac{1}{\alpha}\right)} = 32 d \rho \zeta \sqrt{\frac{1}{m_i} \log \left(\frac{1}{\alpha}\right)}, \label{eq:part_1}
\end{align}

And using eq. (20.2) of \citet{lattimore2020bandit}, the second term of eq. \eqref{eq:2_part} is bounded with probability $ > 1-2\alpha$,
\begin{align}
\left| b^\top G^{-1} \sum_{s=1}^u x_s \eta_s \right | \leq 2 \sqrt{\frac{d}{m_i}\log \left( \frac{1}{\alpha}\right)}.\label{eq:part_2}
\end{align}

Therefore, combine eq. \eqref{eq:part_1} and \eqref{eq:part_2} together and we have
\begin{align}
|b^\top (\hat{w} - w_*)| \leq (2 + 32\sqrt{d} \rho \zeta) \sqrt{\frac{d}{m_i}\log \left( \frac{1}{\alpha}\right)}. \label{eq:confidence}
\end{align}

\textbf{Step 2: Suboptimality upper bound.}

Let $\hat{x}= \argmax_{x \in \cX} \hat{w}^\top x$, then
\begin{align}
\max_{b \in \cX} \hat{w}^\top(b-x_*) &= \hat{w}^\top(\hat{x} - x_*) \nonumber\\
&\leq w^\top_* (\hat{x} - x_*) + (4 + 64\sqrt{d} \rho \zeta)\sqrt{\frac{d}{m}\log\left(\frac{1}{\alpha}\right)} \nonumber\\
&\leq (4 + 64\sqrt{d} \rho \zeta)\sqrt{\frac{d}{m}\log\left(\frac{1}{\alpha}\right)}, \label{eq:condition_2_pre}
\end{align}
where the second inequality is by using eq. \eqref{eq:confidence} twice and the last inequality is due to property of $w_*$ and $x_*$. Compared with Step 6 of Algorithm \ref{alg:pe}, note eq. \eqref{eq:condition_2_pre} requires
\begin{align}
4 + 64\sqrt{d} \rho \zeta \leq 16. \quad \textbf{(Condition 2)}\label{eq:condition_2}
\end{align}

If $x$ is not eliminated after $m_i$ episodes, i.e.,
\begin{align*}
16\sqrt{\frac{d}{m_i}\log \left( \frac{1}{\alpha}\right)} & \geq \max_{b \in \cX} \hat{w}^\top (b-x)\\
&\geq \hat{w}^\top (x_* - x)\\
&\geq w^\top_* (x_* - x) - (4 + 64\sqrt{d} \rho \zeta)\sqrt{\frac{d}{m_i}\log\left(\frac{1}{\alpha}\right)}\\
&\geq f^* - w^\top_* x - 16\sqrt{\frac{d}{m_i}\log\left(\frac{1}{\alpha}\right)}\\
&\geq f^* - f_0(x) - 16\sqrt{\frac{d}{m_i}\log\left(\frac{1}{\alpha}\right)} - 16\rho \zeta \sqrt{\frac{2d}{m_i}\log\left(\frac{1}{\alpha}\right)},
\end{align*}
where the third inequality again uses eq. \eqref{eq:confidence} twice, the fourth inequality is due to \textbf{Condition 2}, and the last inequality is using
\begin{align*}
|f_0(x) - w^\top_* x| \leq \rho (f^* - f_0(x)) \leq 16 \rho \zeta \sqrt{\frac{2d}{m_i}\log\left(\frac{1}{\alpha}\right)}.
\end{align*}
After arranging the result, we have
\begin{align}
f^* - f_0(x) \leq (32 + 16\sqrt{2} \rho \zeta)\sqrt{\frac{2d}{m_i}\log\left(\frac{1}{\alpha}\right)},\label{eq:subopt}
\end{align}
which requires
\begin{align}
    32 + 16\sqrt{2} \rho \zeta \leq 16\zeta. \quad \textbf{(Condition 3)}\label{eq:condition_3}
\end{align}

By considering \textbf{Conditions 1, 2, \& 3} (eq. \eqref{eq:condition_1}, \eqref{eq:condition_2}, \& \eqref{eq:condition_3}) together, it suffices to choose
\begin{align*}
\zeta = 3 \quad \mathrm{and} \quad \rho \leq \frac{1}{16\sqrt{d}}.
\end{align*}

\textbf{Step 3. Combine the episodes.} The last step is to combine all episodes together to prove the main cumulative regret bound. By definition of cumulative regret (eq. \eqref{eq:cr}),
\begin{align*}
R_T &= m_1 + \sum_{i=2}^L \sum_{s=1}^u (f^* - f_0(x_s))\\
&\leq m_1 + \sum_{i=2}^L m_i (32 + 48\sqrt{2} \rho) \sqrt{\frac{4d}{m_i} \log \left(\frac{1}{\alpha}\right)} \\
&\leq O\left( (1 + \rho) \sqrt{d m_L \log \left(\frac{1}{\alpha}\right)}\right)\\
&\leq O \left( (1 + \rho) \sqrt{d T \log \left(\frac{1}{\alpha}\right)} \right),
\end{align*}
where the second line is by eq. \eqref{eq:subopt} and the last line is due to $L = O(\log T)$.
\end{proof}

\subsection{Gap-Dependent Analysis}\label{sec:gap}
In this part, we provide gap-dependence regret analysis for Algorithm \ref{alg:pe}. We assume that  for all suboptimal action $x\in \cX \backslash \{x^*\}$,
\begin{align*}
f_0(x_*) - f_0(x) \geq \Delta.
\end{align*}
Then we can state and prove the following theorem.

\begin{theorem}[Gap-Dependent Cumulative Regret]\label{thm:gap-dep}
Suppose Assumptions \ref{ass:bound}, \ref{ass:one}, \& \ref{ass:rho2} hold and $\alpha = 1/(kT)$. Then Algorithm \ref{alg:pe} guarantees $\forall \ T \geq 1$,
\begin{align*}
R_T &\leq O\left(\frac{d \log kT}{\Delta}\right).
\end{align*}
\end{theorem}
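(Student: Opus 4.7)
The plan is to reuse the per-batch suboptimality guarantee \eqref{eq:subopt} already established during the proof of Theorem~\ref{thm:main}: at the end of episode $i$, every action $x$ surviving in the active set obeys
\[
f^{*}-f_{0}(x) \;\leq\; C_{1}\sqrt{\frac{d\log(1/\alpha)}{m_{i}}}
\]
for an absolute constant $C_{1}$, with $m_{i}=m_{1}\cdot 2^{\,i-1}$ and $\alpha=1/(kT)$. Because the gap assumption forces $f^{*}-f_{0}(x)\geq \Delta$ for every suboptimal $x$, as soon as the right-hand side above drops below $\Delta$ the active set must collapse to $\{x^{*}\}$. This identifies a critical episode index $i^{*}$ satisfying $m_{i^{*}} \asymp d\log(kT)/\Delta^{2}$, after which the algorithm only plays the optimal arm and incurs no further regret.

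Given that checkpoint, I would decompose the cumulative regret by episodes. In episode $i\geq 2$ the algorithm samples from the active set output by episode $i-1$, so by \eqref{eq:subopt} each pull in episode $i$ has instantaneous regret at most $C_{1}\sqrt{d\log(kT)/m_{i-1}} = O(\sqrt{d\log(kT)/m_{i}})$ (using $m_{i}=2m_{i-1}$). Therefore the contribution of episode $i$ is at most $m_{i}\cdot O\bigl(\sqrt{d\log(kT)/m_{i}}\bigr) = O\bigl(\sqrt{m_{i}\,d\log(kT)}\bigr)$. Episode $1$ contributes $m_{1}=O(d\log\log d)$, which is negligible. Episodes with $i>i^{*}$ contribute zero.

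Summing the geometric series $\sum_{i=2}^{\min(L,\,i^{*})}\sqrt{m_{i}\,d\log(kT)}$ is dominated by its last term $O\bigl(\sqrt{m_{i^{*}}\,d\log(kT)}\bigr)$ since the $m_{i}$ double at every step. Substituting $m_{i^{*}}\asymp d\log(kT)/\Delta^{2}$ gives
\[
R_{T} \;\leq\; O\!\left(\sqrt{\frac{d\log(kT)}{\Delta^{2}}\cdot d\log(kT)}\right) \;=\; O\!\left(\frac{d\log(kT)}{\Delta}\right),
\]
as claimed. For small $T$ where episode $i^{*}$ is never reached we have $T \leq \sum_{i\leq i^{*}} m_{i} = O(m_{i^{*}})$, so the $\tilde{O}(\sqrt{dT\log(kT)})$ bound from Theorem~\ref{thm:main} already implies the gap-dependent bound up to constants.

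The main obstacle is essentially bookkeeping: verifying that the union-bounded failure probability across the $O(\log T)$ episodes is absorbed by $\alpha=1/(kT)$, and that the constants arising from \textbf{Conditions 1, 2, \& 3} of Theorem~\ref{thm:main} carry over without degradation. All the substantive ingredients, namely the survival of $x^{*}$ in the active set, the self-bounding control $|\xi_{x}|\leq\rho(f^{*}-f_{0}(x))$, and the G-optimal design inequality $\|b\|_{G^{-1}}^{2}\leq 2d/m$, are reused verbatim from the previous analysis.
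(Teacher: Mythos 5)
Your proposal is correct and follows essentially the same route as the paper's proof: both reuse the per-batch suboptimality bound from Theorem~\ref{thm:main}, identify the critical batch index $i_0$ with $m_{i_0} \asymp d\log(kT)/\Delta^2$ beyond which surviving arms incur no regret, and bound the total by the geometrically dominated sum $\sum_i \sqrt{m_i\, d\log(kT)} = O(\sqrt{m_{i_0}\, d\log(kT)}) = O(d\log(kT)/\Delta)$. Your additional remark handling the small-$T$ regime where $i_0$ is never reached is a harmless refinement the paper omits.
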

\begin{remark}
If a positive suboptimality gap $\Delta$ exists, the cumulative regret bound shows that Algorithm \ref{alg:pe} achieves $\tilde{O}(\log T / \Delta)$ regret. Therefore, Algorithm \ref{alg:pe} is still a no-regret algorithm under the $\rho$-GAM condition and the $\tilde{O}(\log T)$ regret is better than the $\tilde{O}(\sqrt{T})$ regret in Theorem \ref{thm:main}.
\end{remark}

\begin{proof}
Assume the same high-probability events in \eqref{eq:goal_1} hold, which means for all $i$, after $m_i = (\lceil 4d \log \log (d)\rceil +16)2^{i-1}$ episodes, suboptimality of uneliminated arms is bounded by $c_1 \sqrt{\frac{d\log kT}{m_i}}$ for some constant $c_1$.

Under the above high-probability events, let $i_0$ denote the last batch where there may be positive regret, i.e, $i_0$ is the smallest integer such that
$$c_1 \sqrt{\frac{d \log kT}{(\lceil 4d \log \log (d)\rceil +16)2^{i_0-1}}}\leq \Delta,$$
which implies that for some $c_2>0$,
\begin{equation*}
   2^{i_0} \leq \frac{c_2 \log kT}{\Delta^2}.
\end{equation*}

Then similar to analysis in Theorem \ref{thm:main}, cumulative regret can be bounded as
\begin{align*}
R_T &\leq m_1 + \sum_{i=1}^{i_0 - 1} m_{i+1}\cdot c_1 \sqrt{\frac{d \log kT}{m_i}}\\
&\leq d + \sum_{i=1}^{i_0 -1 } 2 c_1 \sqrt{dm_i\cdot\log kT}\\
&\leq c_3 \sqrt{dm_{i_0}\cdot\log kT}\\
&\leq c_4 \frac{d \log kT}{\Delta},
\end{align*}
where $c_3,c_4$ are universal constants.
\end{proof}

\section{Conclusion}\label{sec:conclusion}
Linear stochastic bandits are classical problems in online learning that can be used in many applications, including A/B testing, recommendation system, clinical trial optimization, and materials design. 
We study linear bandits with the underlying reward function being non-linear, which falls into the misspecified bandit framework. Existing work on misspecified bandit usually assumes uniform misspecification where the $\ell_\infty$ distance between the best-in-class function and the true function is upper bounded by the misspecification parameter $\epsilon$. Existing lower bound shows that the $\tilde{\Omega}(\epsilon T)$ term is unavoidable where $T$ is the time horizon, thus the regret bound is always linear. However, in solving optimization problems, one only cares about the approximation error near the global optimal point and approximation error is allowed to be large in highly suboptimal regions. In this paper, we capture this intuition and define a natural model of misspecification, called $\rho$-gap-adjusted misspecificaiton, which only requires the approximation error at each input $x$ to be proportional to the suboptimality gap at $x$ with $\rho$ being the proportion parameter. 

Previous work found that classical LinUCB algorithm is not robust in $\epsilon$-uniform misspecified linear bandit when $\epsilon$ is large. However, we show that LinUCB is automatically robust against such gap-adjusted misspecification. Under mild conditions, e.g., $\rho \leq O(1/\sqrt{\log T})$, we prove that it achieves the near-optimal $\tilde{O}(\sqrt{T})$ regret for problems that the best-known regret is almost linear. Also, LinUCB doesn't need the knowledge of $\rho$ to run. However, if the upper bound of $\rho$ is revealed to LinUCB, the $\beta_t$ term can be carefully chosen according to eq. \eqref{eq:known_rho}. Our technical novelty lies in a new self-bounding argument that bounds part of the regret due to misspecification by the regret itself, which can be of independent interest in more settings.

We further advance the frontier of the GAM condition by presenting a novel Phased Elimination-based (PE) algorithm. We prove that for a fixed $\rho = O(1/\sqrt{d})$, the new algorithm achieves optimal $\tilde{O}(\sqrt{T})$ cumulative regret. Surprisingly, as a by-product, the PE algorithm requires only $O(\log T)$ policy switching cost thanks to the phased elimination algorithmic design, which is highly deployment-efficient. It also enjoys an adaptive $\tilde{O}(\log T)$ regret when a constant suboptimality gap exists.

More broadly, our paper opens a brand new door for research in model misspecification, including misspecified linear bandits, misspecified kernelized bandits, and even reinforcement learning with misspecified function approximation. Moreover, we hope our paper make people rethink about the relationship between function optimization and function approximation. 

\acks{The work was partially supported by UAlbany Computer Science Department startup, NSF Awards \#2007117 and \#2003257, EPSRC New Investigator Award EP/X03917X/1, and the Engineering and Physical Sciences Research Council EP/S021566/1.}

\bibliography{bib}

\newpage
\appendix

\section{Technical Lemmas}\label{app:tech}

\begin{lemma}[Self-Normalized Vector-Valued Martingales (Lemma A.9 of  \citet{agarwal2021rl})]\label{lem:self_norm}
Let $\{\eta_i\}_{i=1}^\infty$ be a real-valued stochastic process with corresponding filtration $\{\cF_i\}_{i=1}^\infty$ such that $\eta_i$ is $\cF_i$ measurable, $\E[\eta_i | \cF_{i-1}] = 0$, and $\eta_i$ is conditionally $\sigma$-sub-Gaussian with $\sigma \in \R^+$. Let $\{ X_i \}_{i=1}^\infty$ be a stochastic process with $X_i \in \cH$ (some Hilbert space) and $X_i$ being $\cF_t$ measurable. Assume that a linear operator $\Sigma: \cH \rightarrow \cH$ is positive deﬁnite, i.e., $x^\top \Sigma x > 0$ for any $x \in \cH$. For any $t$, define the linear operator $\Sigma_t = \Sigma_0 + \sum_{i=1}^t X_i X^\top_i$ (here $xx^\top$ denotes outer-product in $\cH$). With probability at least $1-\delta$, we have for all $t \geq 1$:
\begin{align*}
\left\| \sum_{i=1}^t X_i \eta_i \right\|^2_{\Sigma_t^{-1}} \leq \sigma^2 \log \left( \frac{\det(\Sigma_t) \det(\Sigma_0)^{-1}}{\delta^2}\right).  
\end{align*}
\end{lemma}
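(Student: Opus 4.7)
The plan is to use the method of mixtures of de la Peña and Pang, adapted to the adaptive-sequence setting as in Abbasi-Yadkori, P\'al and Szepesv\'ari. The strategy is to first build a family of exponential supermartingales parameterized by a direction $\lambda\in\cH$, then integrate them against a Gaussian mixing prior on $\lambda$ to collapse the direction and obtain a single scalar supermartingale whose exponent is exactly $\|S_t\|_{\Sigma_t^{-1}}^2$, and finally apply Ville's maximal inequality. Without loss of generality one may work inside the finite-dimensional subspace spanned by $\{X_1,\dots,X_t\}$ at each finite $t$, so all the determinants below are well-defined.

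For the first step, writing $S_t:=\sum_{i=1}^t X_i\eta_i$, I would verify that for every fixed $\lambda\in\cH$ the process
\begin{equation*}
M_t^{\lambda} := \exp\!\Bigl(\lambda^\top S_t - \tfrac{\sigma^2}{2}\,\lambda^\top(\Sigma_t-\Sigma_0)\lambda\Bigr)
\end{equation*}
is a nonnegative supermartingale with $M_0^{\lambda}=1$. This follows from a one-line check using the conditional sub-Gaussian MGF bound $\E[\exp(u\eta_i)\mid \cF_{i-1}]\leq \exp(\sigma^2 u^2/2)$ applied with $u=\lambda^\top X_i$, together with the predictability of $X_i$ (which is how this lemma is typically set up, i.e.\ $X_i$ is $\cF_{i-1}$-measurable).

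For the second and third steps, I would introduce the Gaussian mixing density $\nu(d\lambda)\propto \exp(-\tfrac{1}{2\sigma^2}\lambda^\top \Sigma_0\lambda)\,d\lambda$, normalized so that $\bar M_t := \int M_t^{\lambda}\,\nu(d\lambda)$ satisfies $\bar M_0 = 1$. Tonelli immediately upgrades $\bar M_t$ to a nonnegative supermartingale, and completing the square in $\lambda$ evaluates the resulting Gaussian integral in closed form:
\begin{equation*}
\bar M_t \;=\; \sqrt{\tfrac{\det(\Sigma_0)}{\det(\Sigma_t)}}\,\exp\!\Bigl(\tfrac{1}{2\sigma^2}\,\|S_t\|_{\Sigma_t^{-1}}^2\Bigr).
\end{equation*}
Applying Ville's inequality $\P(\sup_{t\geq 0}\bar M_t\geq 1/\delta)\leq \delta$, taking logarithms on the complement, and multiplying by $2\sigma^2$ immediately yields the claimed bound uniformly over $t\geq 1$. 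I expect the only real obstacle to be the completion-of-squares/Gaussian-integration step: it is routine in $\R^d$ but requires careful bookkeeping, and in the general Hilbert-space statement it needs a mild reduction to the finite-dimensional span of the $X_i$'s so that $\det(\Sigma_t)$ makes sense; everything else is essentially automatic from the sub-Gaussian MGF bound and the supermartingale machinery.
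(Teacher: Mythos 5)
The paper does not prove this lemma; it is imported verbatim as Lemma A.9 of \citet{agarwal2021rl} (itself the self-normalized bound of Abbasi-Yadkori, P\'al and Szepesv\'ari), and your method-of-mixtures argument is exactly the standard proof of that result: the exponential supermartingale $M_t^{\lambda}$, the Gaussian mixture collapsing to $\sqrt{\det(\Sigma_0)/\det(\Sigma_t)}\exp\bigl(\tfrac{1}{2\sigma^2}\|S_t\|_{\Sigma_t^{-1}}^2\bigr)$, and Ville's inequality give precisely the stated bound, and you correctly note that the hypothesis must be read as $X_i$ being $\cF_{i-1}$-measurable for the supermartingale step to go through. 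The only slip is the scaling of the mixing density: to make the quadratic forms combine into $-\tfrac{\sigma^2}{2}\lambda^\top\Sigma_t\lambda$ you need $\nu(d\lambda)\propto\exp\bigl(-\tfrac{\sigma^2}{2}\lambda^\top\Sigma_0\lambda\bigr)\,d\lambda$ rather than $\exp\bigl(-\tfrac{1}{2\sigma^2}\lambda^\top\Sigma_0\lambda\bigr)\,d\lambda$; with that correction the closed form you wrote for $\bar M_t$ is exactly right and the rest is as you describe.
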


\begin{lemma}[Sherman-Morrison Lemma \citep{sherman1950adjustment}]\label{lem:sm}
Let $A$ denote a matrix and $b,c$ denote two vectors. Then
\begin{align*}
(A + bc^\top)^{-1} = A^{-1} - \frac{A^{-1} bc^\top A^{-1}}{1+ c^\top A^{-1} b}.
\end{align*}
\end{lemma}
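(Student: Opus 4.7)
The plan is to verify the identity directly: denote the proposed inverse by $B := A^{-1} - \frac{A^{-1} b c^\top A^{-1}}{1 + c^\top A^{-1} b}$, and then show $(A + bc^\top)\, B = I$. Since $A + bc^\top$ is a square matrix, a one-sided inverse is automatically a two-sided inverse, so this single computation suffices. The implicit nondegeneracy conditions needed are that $A$ is invertible and that the scalar $1 + c^\top A^{-1} b \neq 0$, so the denominator in $B$ makes sense.

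The key simplification is to treat $\alpha := c^\top A^{-1} b \in \mathbb{R}$ as a scalar so that it commutes with everything in sight. Expanding $(A + bc^\top)B$ gives four matrix terms:
\begin{align*}
(A + bc^\top)B = AA^{-1} + bc^\top A^{-1} - \frac{A A^{-1} b c^\top A^{-1}}{1+\alpha} - \frac{b c^\top A^{-1} b c^\top A^{-1}}{1+\alpha}.
\end{align*}
The first term is $I$. In the third term $AA^{-1} = I$, so it simplifies to $\tfrac{b c^\top A^{-1}}{1+\alpha}$. In the fourth term, the middle factor $c^\top A^{-1} b = \alpha$ is a scalar, so the term simplifies to $\tfrac{\alpha\, b c^\top A^{-1}}{1+\alpha}$.

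Collecting the three rank-one terms over the common denominator $1+\alpha$, the numerator becomes
\begin{align*}
(1+\alpha)\, bc^\top A^{-1} - bc^\top A^{-1} - \alpha\, bc^\top A^{-1} = 0,
\end{align*}
so $(A + bc^\top) B = I$, which completes the verification. There is no serious obstacle here: the only thing to be careful about is keeping track of which objects are scalars (namely $\alpha$ and $1+\alpha$) and which are rank-one matrices (namely $bc^\top A^{-1}$ and $A^{-1} b c^\top A^{-1}$), so that the scalar factor can be pulled out and the cancellation can be carried out cleanly. An alternative route would be to use a $2\times 2$ block-matrix inversion of $\begin{bmatrix}A & b \\ -c^\top & 1\end{bmatrix}$ and read off the Schur complement, but the direct multiplication above is shorter and self-contained.
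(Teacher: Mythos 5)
Your verification is correct: expanding $(A+bc^\top)B$, pulling out the scalar $\alpha = c^\top A^{-1}b$, and cancelling the three rank-one terms over the common denominator $1+\alpha$ is exactly the standard argument, and your appeal to the fact that a one-sided inverse of a square matrix is two-sided is sound. Note, however, that the paper offers no proof of this lemma at all --- it is stated as a known result with a citation to \citet{sherman1950adjustment} --- so there is no in-paper argument to compare against. One genuinely useful thing your write-up adds is the explicit nondegeneracy hypotheses: the paper's statement silently assumes $A$ is invertible and $1 + c^\top A^{-1} b \neq 0$, and without the latter the right-hand side is undefined (indeed $A + bc^\top$ is singular precisely when $1 + c^\top A^{-1}b = 0$, given $A$ invertible); making these conditions explicit, as you do, is the correct way to state the lemma. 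For the paper's actual use of the lemma in Lemma~\ref{lem:sum_pos}, these conditions hold automatically since there $A = \Sigma_i - x_i x_i^\top \succeq \lambda I \succ 0$ and $1 + x_i^\top A^{-1} x_i \geq 1$, so your proof covers the application without further work.
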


\begin{lemma}[Lemma 6.10 of \citet{agarwal2021rl}]\label{lem:det}
Define $u_t = \sqrt{x^\top_t \Sigma^{-1}_t x_t}$ and we have
\begin{align*}
\det \Sigma_T = \det \Sigma_0 \prod_{t=0}^{T-1} (1 + u^2_t). 
\end{align*}
\end{lemma}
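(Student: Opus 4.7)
The plan is to prove the identity by induction on $T$, exploiting the rank-one update structure $\Sigma_{t+1} = \Sigma_t + x_t x_t^\top$ that follows directly from the definition of $\Sigma_t$ in equation \eqref{eq:sigma_t}. The base case $T = 0$ is vacuous since the empty product equals $1$ and $\Sigma_T = \Sigma_0$.

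For the inductive step, I would factor out $\Sigma_t$ from the one-step update by writing
\begin{align*}
\Sigma_{t+1} \;=\; \Sigma_t + x_t x_t^\top \;=\; \Sigma_t^{1/2}\bigl(I + \Sigma_t^{-1/2} x_t x_t^\top \Sigma_t^{-1/2}\bigr)\Sigma_t^{1/2},
\end{align*}
which is legitimate because $\Sigma_t \succeq \lambda I \succ 0$ so its symmetric square root and inverse exist. Taking determinants and using multiplicativity gives
\begin{align*}
\det(\Sigma_{t+1}) \;=\; \det(\Sigma_t)\cdot \det\bigl(I + v_t v_t^\top\bigr), \qquad v_t := \Sigma_t^{-1/2} x_t.
\end{align*}
The key step is then the standard fact that for any vector $v$, $\det(I + vv^\top) = 1 + \|v\|_2^2$ (this is either the matrix determinant lemma, or seen directly: $I + vv^\top$ has eigenvalue $1+\|v\|_2^2$ along $v$ and eigenvalue $1$ on the orthogonal complement). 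Substituting $\|v_t\|_2^2 = x_t^\top \Sigma_t^{-1} x_t = u_t^2$ yields $\det(\Sigma_{t+1}) = \det(\Sigma_t)(1 + u_t^2)$.

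Finally, chaining this recurrence from $t = 0$ to $t = T-1$ telescopes to $\det(\Sigma_T) = \det(\Sigma_0)\prod_{t=0}^{T-1}(1+u_t^2)$, closing the induction. There is no real obstacle here; the only subtlety worth flagging is ensuring $\Sigma_t \succ 0$ throughout so that $\Sigma_t^{-1/2}$ is well-defined, which is immediate from $\Sigma_0 = \lambda I$ with $\lambda = \sigma^2/C_w^2 > 0$ and the fact that adding rank-one PSD terms preserves positive definiteness. If one wants to avoid square roots entirely, the same conclusion follows from Sylvester's determinant identity $\det(I + AB) = \det(I + BA)$ applied with $A = \Sigma_t^{-1} x_t$ and $B = x_t^\top$, reducing to a $1\times 1$ determinant equal to $1 + u_t^2$; this variant is essentially a direct application of Lemma \ref{lem:sm} at the level of determinants.
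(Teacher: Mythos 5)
Your proof is correct, and it is essentially the standard argument for this fact: the paper itself states this lemma without proof (importing it from \citet{agarwal2021rl}), and the cited source's proof proceeds exactly as you do, via the rank-one update $\Sigma_{t+1} = \Sigma_t + x_t x_t^\top$, the factorization through $\Sigma_t^{1/2}$, the identity $\det(I + vv^\top) = 1 + \|v\|_2^2$, and telescoping. The only quibble is a cosmetic one: your closing remark conflates Sylvester's determinant identity with Lemma \ref{lem:sm} (Sherman--Morrison concerns inverses; its determinant companion is the matrix determinant lemma), but this does not affect the validity of either route.
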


\begin{lemma}[Potential Function Bound (Lemma 6.11 of \citet{agarwal2021rl})]\label{lem:potential}
For any sequence $x_0,...,x_{T-1}$ such that for $t< T, \|x_t\|_2 \leq C_b$, we have
\begin{align*}
\log \left( \frac{\det \Sigma_{T-1}}{\det \Sigma_0}\right) &= \log \det \left( I + \frac{1}{\lambda} \sum_{t=0}^{T-1} x_t x^\top_t \right)\nonumber\\
& \leq d\log\left(1+ \frac{TC_b^2}{d \lambda} \right). 
\end{align*}
\end{lemma}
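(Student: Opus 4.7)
The plan is to prove the identity and the inequality separately, both via short linear-algebraic manipulations. For the identity, I would factor $\lambda$ out of each eigenvalue of $\Sigma_{T-1}$. Since $\Sigma_0 = \lambda I$ gives $\det \Sigma_0 = \lambda^d$, and writing $S := \sum_{t=0}^{T-1} x_t x_t^\top$ (I will silently reconcile the off-by-one in the indexing, since the substance is unaffected), we have $\Sigma_{T-1} = \lambda(I + \lambda^{-1} S)$, so $\det \Sigma_{T-1} = \lambda^d \det(I + \lambda^{-1} S)$ and dividing by $\det \Sigma_0$ yields the stated equality.

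For the upper bound, I would diagonalize. Let $\mu_1,\dots,\mu_d \ge 0$ be the eigenvalues of $\lambda^{-1} S$, so $\det(I + \lambda^{-1}S) = \prod_{i=1}^d (1+\mu_i)$ and hence
\begin{equation*}
\log\det(I + \lambda^{-1}S) \;=\; \sum_{i=1}^d \log(1+\mu_i).
\end{equation*}
By concavity of $\log(1+\cdot)$ and Jensen's inequality, $\sum_i \log(1+\mu_i) \le d\,\log\!\bigl(1 + \tfrac{1}{d}\sum_i \mu_i\bigr)$. Now $\sum_i \mu_i = \tr(\lambda^{-1} S) = \lambda^{-1}\sum_{t=0}^{T-1}\|x_t\|_2^2$, and using the norm bound $\|x_t\|_2 \le C_b$ gives $\sum_i \mu_i \le T C_b^2 / \lambda$. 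Plugging this in yields the claimed $d\log\!\bigl(1 + T C_b^2/(d\lambda)\bigr)$.

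There is essentially no obstacle here: the only subtleties are notational (the off-by-one between $\Sigma_{T-1}$ and the summation up to $T-1$, which can be absorbed by choosing whichever of $T$ or $T-1$ makes the index match; the factor of $T$ in the bound is only an upper bound and is unaffected). I would note in passing that the trace/Jensen step is tight up to constants, and that the lemma can equivalently be written as a telescoping-determinant identity combined with $\log(1+u_t^2)\le \log\det(\Sigma_{t+1}/\Sigma_t)$ — a fact the paper also uses nearby via Lemma \ref{lem:det} — but the direct trace-plus-Jensen route given above is the cleanest self-contained argument.
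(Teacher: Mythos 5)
Your proof is correct and takes essentially the same approach as the paper's: the paper gives no proof of this lemma itself, citing Lemma 6.11 of \citet{agarwal2021rl}, whose standard argument is exactly your route --- factor $\lambda$ out of the determinant using $\det\Sigma_0=\lambda^d$, then bound $\log\det(I+\lambda^{-1}S)=\sum_{i=1}^d\log(1+\mu_i)\le d\log\bigl(1+\tr(\lambda^{-1}S)/d\bigr)$ by Jensen and use $\tr(S)\le TC_b^2$. You were also right to flag, and harmlessly absorb, the off-by-one between $\Sigma_{T-1}$ and the summation up to $T-1$, which is an inconsistency in the lemma's statement rather than a defect of your argument.
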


\begin{lemma}[Upper bound of $\sum_{i=0}^{t-1} x^\top_i \Sigma_t^{-1} x_i$]\label{lem:sum_pos2}
\begin{align*}
\sum_{i=0}^{t-1} x^\top_i \Sigma^{-1}_t x_i \leq d.
\end{align*}
\end{lemma}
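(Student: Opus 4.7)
\textbf{Proof plan for Lemma~\ref{lem:sum_pos2}.} The plan is to rewrite the sum as a trace and then use the definition of $\Sigma_t$ to collapse it. Since each summand $x_i^\top \Sigma_t^{-1} x_i$ is a scalar, it equals its own trace, so by cyclicity,
\begin{align*}
\sum_{i=0}^{t-1} x_i^\top \Sigma_t^{-1} x_i = \sum_{i=0}^{t-1} \tr\!\left(\Sigma_t^{-1} x_i x_i^\top\right) = \tr\!\left(\Sigma_t^{-1} \sum_{i=0}^{t-1} x_i x_i^\top\right).
\end{align*}

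Next, I would use the definition $\Sigma_t = \lambda I + \sum_{i=0}^{t-1} x_i x_i^\top$ from \eqref{eq:sigma_t} to substitute $\sum_{i=0}^{t-1} x_i x_i^\top = \Sigma_t - \lambda I$. This yields
\begin{align*}
\tr\!\left(\Sigma_t^{-1}(\Sigma_t - \lambda I)\right) = \tr(I) - \lambda \tr(\Sigma_t^{-1}) = d - \lambda \tr(\Sigma_t^{-1}).
\end{align*}

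Finally, since $\lambda > 0$ and $\Sigma_t \succeq \lambda I \succ 0$, the inverse $\Sigma_t^{-1}$ is positive definite, hence $\tr(\Sigma_t^{-1}) \geq 0$ and the second term on the right is nonpositive. Dropping it gives the claimed upper bound $d$. There is no real obstacle here; the only thing one must be a bit careful about is the trace/cyclic-manipulation step at the beginning, which works precisely because the sum's index runs from $0$ to $t-1$ matching the matrices that appear in $\Sigma_t$.
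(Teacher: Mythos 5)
Your proof is correct and follows essentially the same route as the paper's: rewrite each summand as a trace, use cyclicity to pull the sum inside, substitute $\sum_{i=0}^{t-1} x_i x_i^\top = \Sigma_t - \lambda I$, and drop the nonpositive term $-\lambda\tr(\Sigma_t^{-1})$ using positive (semi)definiteness of $\Sigma_t^{-1}$. No differences worth noting.
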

\begin{proof}
Recall that $\Sigma_t = \sum_{i=0}^{t-1} x_i x_i^T + \lambda I_d$.
\begin{align*} \sum_{i=0}^{t-1} x^\top_i \Sigma^{-1}_t x_i   &= \sum_{i=0}^{t-1}\mathrm{tr}\left[ 
 \Sigma^{-1}_t x_ix_i^T \right]\\
 &= \mathrm{tr}\left[ 
 \Sigma^{-1}_t \sum_{i=0}^{t-1} x_ix_i^T \right] \\
 &= \mathrm{tr}\left[ 
 \Sigma^{-1}_t (\Sigma_t - \lambda I_d)\right]  \\
 &= \mathrm{tr}\left[I_d\right] - \mathrm{tr}\left[\lambda \Sigma^{-1}_t\right]\leq d.
 \end{align*}
 The last line follows from the fact that $\Sigma^{-1}_t$ is positive semidefinite.
\end{proof}

\begin{lemma}[Upper bound of $\sum_{i=0}^{t-1} x^\top_i \Sigma_i^{-1} x_i$ (adapted from \citet{abbasi2011improved})]\label{lem:sum_pos}
\begin{align*}
\sum_{i=0}^{t-1} x^\top_i \Sigma^{-1}_i x_i \leq 2d \log \left(1 + \frac{t C_b^2}{d \lambda} \right).
\end{align*}
\end{lemma}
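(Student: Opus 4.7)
This is the classical elliptical potential lemma, and my plan is to reduce it to the determinant-potential identity of Lemma~\ref{lem:det} combined with the determinant growth bound of Lemma~\ref{lem:potential}, with one short elementary calculus step that trades the quadratic $u_i^2 := x_i^\top \Sigma_i^{-1} x_i$ for the logarithm $\log(1+u_i^2)$.

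First I would control each summand. Since $\Sigma_i = \lambda I + \sum_{j<i} x_j x_j^\top \succeq \lambda I$ by construction in eq.~\eqref{eq:sigma_t}, we have $\|\Sigma_i^{-1}\|_{\mathrm{op}} \leq 1/\lambda$, and therefore $u_i^2 \leq \|x_i\|_2^2/\lambda \leq C_b^2/\lambda$ using Assumption~\ref{ass:bound}. In the natural regime $C_b^2/\lambda \leq 1$ (consistent with the LinUCB choice $\lambda = \sigma^2/C_w^2$ in Algorithm~\ref{alg:linucb} and the usual signal-to-noise scaling), this forces $u_i^2 \in [0,1]$ for every $i$.

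Next I would invoke the elementary inequality $y \leq 2\log(1+y)$ on $[0,1]$. This is a one-line check: the function $2\log(1+y) - y$ vanishes at $y=0$, has derivative $2/(1+y) - 1 \geq 0$ on $[0,1]$, and is therefore nonnegative there. Applied term-by-term this gives
\begin{align*}
\sum_{i=0}^{t-1} u_i^2 \;\leq\; 2\sum_{i=0}^{t-1} \log(1+u_i^2).
\end{align*}
Chaining the two aggregate identities finishes the job: by Lemma~\ref{lem:det}, $\sum_{i=0}^{t-1}\log(1+u_i^2) = \log(\det \Sigma_t / \det \Sigma_0)$, and by Lemma~\ref{lem:potential} this is in turn at most $d \log(1 + tC_b^2/(d\lambda))$, producing the stated bound $2d \log(1 + tC_b^2/(d\lambda))$.

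The only real subtlety is the boundedness step $u_i^2 \leq 1$, which is also where the universal constant $2$ in the statement gets pinned down. If one wishes to dispense with any implicit assumption $C_b^2/\lambda \leq 1$, one can replace the factor $2$ by the tighter ratio $(C_b^2/\lambda)/\log(1+C_b^2/\lambda)$ arising from the concavity bound $y \leq \frac{M}{\log(1+M)}\log(1+y)$ on $[0,M]$ with $M = C_b^2/\lambda$, or equivalently run the argument with $\min(u_i^2,1)$ in place of $u_i^2$ as in \citet{abbasi2011improved} and then absorb the truncated mass into constants. I do not expect either refinement to introduce any genuine difficulty, so the main obstacle here is essentially bookkeeping rather than a new idea.
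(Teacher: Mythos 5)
Your overall skeleton is the same as the paper's: bound each summand $u_i^2 := x_i^\top \Sigma_i^{-1} x_i$ by $1$, apply the elementary inequality $y \leq 2\log(1+y)$ on $[0,1)$ term by term, and then chain Lemma~\ref{lem:det} with Lemma~\ref{lem:potential} to arrive at $2d\log(1 + tC_b^2/(d\lambda))$. Those last two steps are identical to the paper's and are fine.

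The divergence, and the genuine gap, is in how you justify $u_i^2 \leq 1$. You derive it from $\|\Sigma_i^{-1}\|_{\mathrm{op}} \leq 1/\lambda$ plus the extra hypothesis $C_b^2/\lambda \leq 1$. That hypothesis is not part of the lemma, is not in Assumption~\ref{ass:bound}, and is not implied by the choice $\lambda = \sigma^2/C_w^2$ in Algorithm~\ref{alg:linucb} (it would amount to $C_b^2 C_w^2 \leq \sigma^2$, which need not hold); so as written you prove a conditional version of the statement. The paper instead claims $u_i^2 < 1$ unconditionally via Sherman--Morrison, writing $\Sigma_i = A + x_i x_i^\top$ with $A = \Sigma_i - x_i x_i^\top$ and concluding $u_i^2 = \frac{x_i^\top A^{-1} x_i}{1 + x_i^\top A^{-1} x_i} < 1$ --- but be aware that this step silently needs $x_i^\top A^{-1} x_i \geq 0$, which can fail (in $d=1$ with $\lambda = 0.1$ and $x_0 = 1$ one gets $u_0^2 = 10$), so the paper's unconditional claim is not sound either. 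The honest repairs are exactly the ones you mention in passing: work with $\min(1, u_i^2)$ as in \citet{abbasi2011improved}, or carry the constant $(C_b^2/\lambda)/\log(1 + C_b^2/\lambda)$ in place of $2$. Either way the resulting bound is not literally the one stated, so gesturing at the fix without propagating the changed constant does not yet close the argument; you should either state the assumption $C_b^2 \leq \lambda$ explicitly or restate the lemma with the truncated summands.
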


\begin{proof}
First we prove that $\forall i \in \{0, 1,..., t-1\}, 0\leq x_i^\top \Sigma^{-1}_i x_i < 1$. Recall the definition of $\Sigma_i$ and we know $\Sigma^{-1}_i$ is a positive semidefinite matrix and thus $0 \leq x_i^\top \Sigma^{-1}_i x_i$. To prove $x_i^\top \Sigma^{-1}_i x_i < 1$, we need to decompose $\Sigma_i$ and write
\begin{align*}
x_i^\top \Sigma^{-1}_i x_i &= x_i^\top \left(\lambda I + \sum_{j=0}^{i-1} x_j x^\top_j \right)^{-1} x_i\\
&= x_i^\top \left(x_i x_i^\top - x_i x_i^\top + \lambda I + \sum_{j=0}^{i-1} x_j x^\top_j \right)^{-1} x_i.
\end{align*}
Let $A = - x_i x_i^\top + \lambda I + \sum_{j=0}^{i-1} x_j x^\top_j$ and it becomes
\begin{align*}
x^\top_i \Sigma^{-1}_i x_i = x^\top_i (x_i x^\top_i + A)^{-1} x_i.
\end{align*}
By Sherman-Morrison lemma (Lemma \ref{lem:sm}), we have
\begin{align*}
x^\top_i \Sigma^{-1}_i x_i &= x^\top_i \left(A^{-1} - \frac{A^{-1} x_i x^\top_i A^{-1}}{1 + x^\top_i A^{-1} x_i} \right) x_i\\
&= x^\top_i A^{-1} x_i - \frac{x^\top_i A^{-1} x_i x^\top_i A^{-1} x_i}{1 + x^\top_i A^{-1} x_i}\\
&= \frac{x^\top_i A^{-1} x_i}{1 + x^\top_i A^{-1} x_i} < 1.
\end{align*}
Next we use the fact that $\forall x \in [0, 1), x \leq 2\log(x+1)$ and we have
\begin{align*}
\sum_{i=0}^{t-1} x^\top_i \Sigma^{-1}_i x_i &\leq \sum_{i=0}^{t-1} 2\log \left(1+ x^\top_i \Sigma^{-1}_i x_i \right)\\
&\leq 2 \log \left( \frac{\det(\Sigma_{t-1})}{\det(\Sigma_0)} \right)\\
&\leq 2 d \log \left( 1 + \frac{t C^2_b}{d \lambda}\right),
\end{align*}
where the last two lines are by Lemma \ref{lem:det} and Lemma \ref{lem:potential}.
\end{proof}

Finally, we restate Proposition \ref{prop:perservation} and show its proofs.

\begin{proposition}[Restatement of Proposition \ref{prop:perservation}]
Let $f$ be a $\rho$-GAM approximation of $f_0$ (Definition~\ref{def:lm}). Then it holds:
\begin{itemize}
\item (Preservation of maximizers) $$\argmax_{x}f(x) =\argmax_{x}f_{0}(x).$$
\item  (Preservation of max value) $$\max_{x\in\mathcal{X}}f(x)=f^*.$$
\item (Self-bounding property) $$|f(x) - f_0(x)| \leq \rho (f^* - f_0(x)) = \rho r(x).$$
\end{itemize}
\end{proposition}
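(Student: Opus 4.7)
The plan is to prove the three items in the reverse order of the statement, since the self-bounding property is essentially a restatement of Definition~\ref{def:lm}, and the other two items follow from it together with $\rho<1$.

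First I would establish the self-bounding property. Since $f^*=\max_{x\in\mathcal{X}} f_0(x)$, the denominator $f^*-f_0(x)$ is nonnegative for every $x\in\mathcal{X}$. At any point where the denominator is strictly positive, multiplying the ratio bound in Definition~\ref{def:lm} through by $f^*-f_0(x)$ yields $|f(x)-f_0(x)|\le \rho\,(f^*-f_0(x))$. At any $x_*\in\argmax_x f_0(x)$ the denominator is zero, and the only way for the supremum of the ratio to remain finite (let alone bounded by $\rho<1$) is that the numerator also vanishes there; so $f(x_*)=f_0(x_*)=f^*$ and the self-bounding inequality holds trivially at those points too.

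Next I would prove preservation of max value, i.e., $\max_x f(x)=f^*$. The previous paragraph already shows $f(x_*)=f^*$ at any maximizer $x_*$ of $f_0$, so $\max_x f(x)\ge f^*$. For the reverse inequality, suppose for contradiction that some $x\in\mathcal{X}$ satisfies $f(x)>f^*$. Then $f(x)-f_0(x)>f^*-f_0(x)\ge 0$, so $|f(x)-f_0(x)|>f^*-f_0(x)$. If $f^*-f_0(x)>0$, dividing gives a ratio strictly greater than $1>\rho$, contradicting Definition~\ref{def:lm}. If instead $f^*-f_0(x)=0$, then $x$ is itself a maximizer of $f_0$ and we already showed $f(x)=f^*$, contradicting $f(x)>f^*$. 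Hence $\max_x f(x)=f^*$.

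Finally I would prove preservation of maximizers by showing set-equality in both directions, using the first two parts. If $x\in\argmax_x f_0(x)$, then $f(x)=f_0(x)=f^*=\max_x f(x)$, so $x\in\argmax_x f(x)$. Conversely, if $x\in\argmax_x f(x)$, then $f(x)=f^*$ by the max-value preservation, so the self-bounding property gives $|f^*-f_0(x)|\le \rho\,(f^*-f_0(x))$, i.e., $(1-\rho)(f^*-f_0(x))\le 0$; since $\rho<1$, this forces $f_0(x)=f^*$, so $x\in\argmax_x f_0(x)$.

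Strictly speaking there is no ``hard part''; the whole proposition follows almost mechanically from Definition~\ref{def:lm} and $\rho<1$. The one subtlety worth being explicit about is the argument at the zeros of the denominator $f^*-f_0(x)$: the condition in Definition~\ref{def:lm} has to be interpreted so that the numerator is forced to vanish wherever the denominator does, which is exactly what pins down $f(x_*)=f^*$ at maximizers and drives both the max-value and maximizer-preservation arguments.
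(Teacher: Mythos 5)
Your proof is correct and follows essentially the same route as the paper's: interpret the ratio condition as $|f(x)-f_0(x)|\le\rho|f^*-f_0(x)|$ (so the numerator must vanish wherever the denominator does, pinning $f=f_0=f^*$ at maximizers of $f_0$), derive $\max_x f(x)=f^*$ by contradiction, and then get maximizer preservation from $(1-\rho)(f^*-f_0(x))\le 0$. The only cosmetic differences are the order of the three items and that you rule out $f(x)>f^*$ at an arbitrary such point rather than at a compactness-supplied maximizer of $f$ as the paper does.
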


Equivalently, $\rho$-gap-adjusted misspecification (Definition \ref{def:lm}) satisfies 
\begin{equation}\label{eqn:rho_miss}
 \left|f(x) - f_0(x) \right|\leq \rho    
\left|f^* - f_0(x)\right|,\;\;\forall x \in \cX.
\end{equation}

\begin{proof}[Proof of preservation of max value: $\max_{x\in\mathcal{X}}f(x)=f^*$]

Let $f^*_w := \max_{x\in\mathcal{X}}f(x)$. We first prove $f^*_w\leq f^*$ by contradiction. Suppose $f^*_w > f^*$, since $\mathcal{X}$ is compact, there exists $x_w\in\cX$ such that $f(x_w)=f^*_w>f^*$. Then by eq. \eqref{eqn:rho_miss} this implies
\[
f(x_w)-f_0(x_w)\leq \rho (f^*-f_0(x_w))\Rightarrow f^*<f^*_w=f(x_w)\leq \rho f^*+(1-\rho)f_0(x_w)\leq f^*
\]
Contraction! Therefore, $f_w^*\leq f^*$. On the other hand, choose $x_0\in\argmax_{x\in\cX}f_0(x)$, then by \eqref{eqn:rho_miss} $f(x_0)=f_0(x_0)=f^*$. This implies $f_w^*\geq f^*$. Combing both results to obtain $f_w^*= f^*$.
\end{proof}

\begin{proof}[Proof of preservation of maximizers: $\argmax_{x}f(x) =\argmax_{x}f_{0}(x)$]

Using that $f(x)\leq \rho f^*+(1-\rho)f_0(x)$ and $\max_{x\in\mathcal{X}}f(x)=f^*$, it is easy to verify $\argmax_{x}f(x) \subset\argmax_{x}f_{0}(x)$. On the other hand, if $x'\in\argmax_{x}f_{0}(x)$, then by eq. \eqref{eqn:rho_miss} $f(x')=f_0(x')=f^*$ and this means $\argmax_{x}f_0(x) \subset\argmax_{x}f(x)$. 
\end{proof}

\begin{proof}[Proof of self-bounding property]
This directly comes from the definition.
\end{proof}

\section{Weak Gap-Adjusted Misspecification}\label{app:weak}

In addition, we can modify Definition~\ref{def:lm} with a slightly weaker condition that only requires $\argmax_{x}f_{w_*}(x) =\argmax_{x}f_{0}(x)$
but not necessarily $\max_{x\in\mathcal{X}}f_{w_*}(x)=f^*$.

\begin{definition}[Weaker $\rho$-gap-adjusted misspecification]\label{def:lm_weak}
Denote $f_w^*=\max_{x\in\mathcal{X}} f_w(x)$. There exists $w\in\cW$ such that for a parameter $0 \leq \rho < 1$,
\begin{align*}
\sup_{x \in \cX} \left| \frac{f_{w}(x) - f^*_w+f^*-f_0(x)}{f^* - f_0(x)}\right|\leq \rho.
\end{align*}
\end{definition}

\begin{remark}
See Figure \ref{fig:weak} for an example satisfying Definition \ref{def:lm_weak}. Both Definition~\ref{def:lm} and Definition~\ref{def:lm_weak} are defined in the generic way that does not require any assumption on the parametric form of $f_w$. While in this paper we focus on the linear bandit setting, this notion can be applied to arbitrary parametric function approximation learning problem. In this paper, we stick to Definition~\ref{def:lm} and linear function approximation for conciseness and clarity.
\end{remark}

\begin{figure*}[!htbp]
	\centering    
	\subfigure[$\rho$-gap-adjusted misspecification]{\label{fig:example}\includegraphics[width=0.45\linewidth]{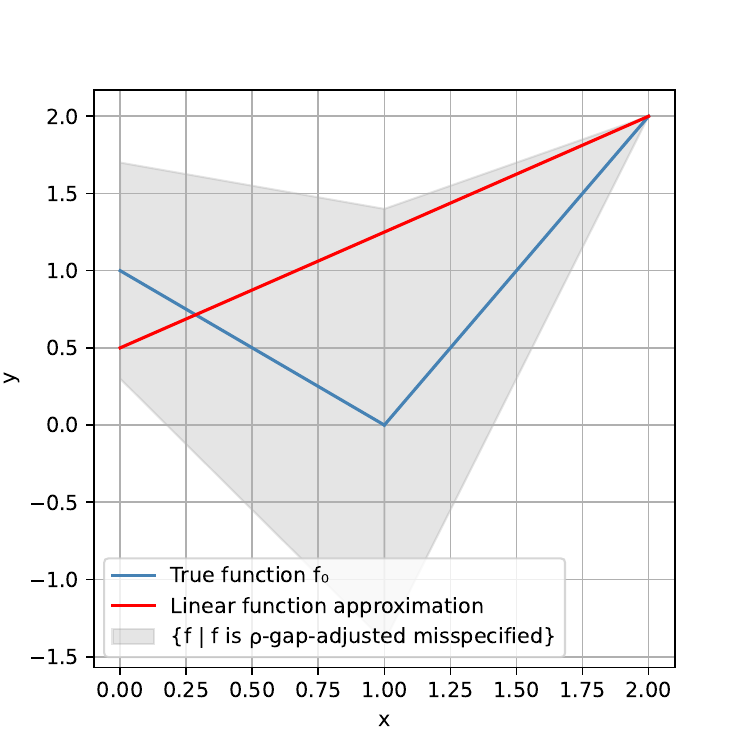}}
	\subfigure[Weak $\rho$-gap-adjusted misspecification]{\label{fig:example2}\includegraphics[width=0.45\linewidth]{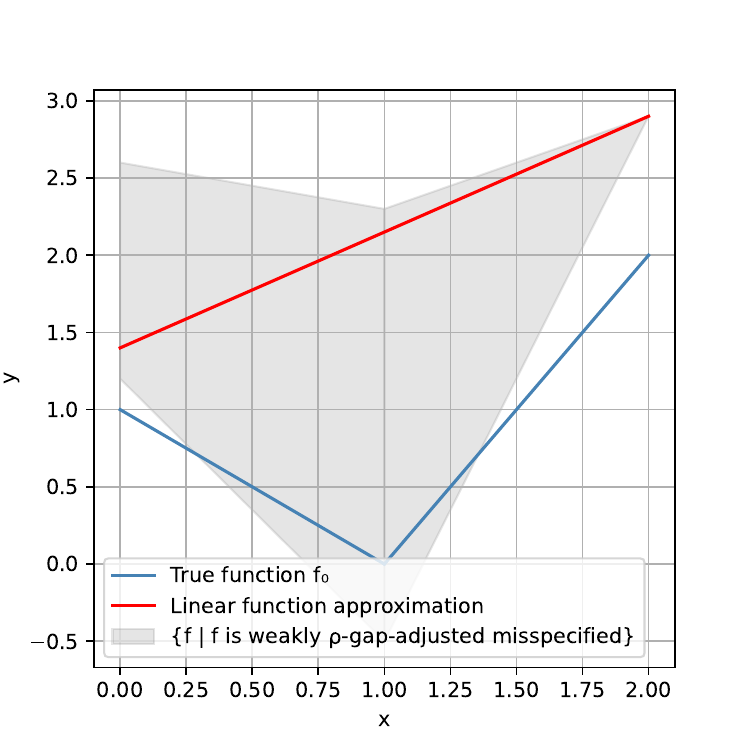}}
	\caption{(a): An example of $\rho$-gap-adjusted misspecification (Definition \ref{def:lm}) in $1$-dimension where $\rho=0.7$. The blue line shows a non-linear true function and the gray region shows the gap-adjusted misspecified function class. Note the vertical range of gray region at a certain point $x$ depends on the suboptimal gap. For example, at $x=1$ suboptimal gap is $2$ and the vertical range is $4\rho=2.8$. The red line shows a feasible linear function that is able to optimize the true function by taking $x_*=2$.  (b): An example of weak $\rho$-gap-adjusted misspecification (Definition \ref{def:lm_weak}) in $1$-dimension where $\rho=0.7$. The difference to Figure \ref{fig:example} is that one can shift the qualifying approximation arbitrarily up or down and the specified model only has to $\rho$-RAM approximate $f_0$ up to an additive constant factor.}
	\label{fig:weak}
\end{figure*}

\subsection{Property of Weak Gap-Adjusted Misspecification}

Under the weak $\rho$-gap-adjusted misspecification condition, it no longer holds $f_w^*=f^*$. However, it still preserves the maximizers.

\begin{proposition}\label{prop_weka_rho}
 Under the weak $\rho$-gap-adjusted misspecification condition, it holds $$\argmax_{x}f(x) =\argmax_{x}f_{0}(x).$$
\end{proposition}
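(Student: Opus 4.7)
The plan is to unwrap the absolute-value condition in Definition~\ref{def:lm_weak} into a two-sided bound on $f_w(x) - f_w^*$ in terms of the suboptimality gap $r(x) := f^* - f_0(x)$, and then read off the set of maximizers from the two resulting inequalities. Concretely, since $r(x) \geq 0$ for every $x \in \cX$, the condition
\begin{align*}
|f_w(x) - f_w^* + r(x)| \leq \rho\, r(x)
\end{align*}
is equivalent to the sandwich
\begin{align*}
-(1+\rho)\, r(x) \,\leq\, f_w(x) - f_w^* \,\leq\, -(1-\rho)\, r(x).
\end{align*}
Both $f_w(x) - f_w^* \leq 0$ and $r(x) \geq 0$ hold identically, so these inequalities just quantify how the suboptimality gaps of $f_w$ and $f_0$ track each other up to constants $(1-\rho)$ and $(1+\rho)$.

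Next I would chase both inclusions using this sandwich. For the inclusion $\argmax_x f_0(x) \subseteq \argmax_x f_w(x)$, pick any $x' \in \argmax_x f_0(x)$, so $r(x') = 0$; the upper inequality gives $f_w(x') - f_w^* \leq 0$ while the lower one gives $f_w(x') - f_w^* \geq 0$, so $f_w(x') = f_w^*$ and $x' \in \argmax_x f_w(x)$. For the reverse inclusion, take any $x' \in \argmax_x f_w(x)$, so $f_w(x') - f_w^* = 0$; the lower bound then forces $(1-\rho)\, r(x') \leq 0$, and since $\rho < 1$ and $r(x') \geq 0$ we conclude $r(x') = 0$, i.e., $x' \in \argmax_x f_0(x)$.

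There is no real obstacle here beyond being careful about the strict inequality $\rho < 1$, which is exactly what makes the reverse direction work (it rules out the degenerate case $\rho = 1$ where a maximizer of $f_w$ need not be a maximizer of $f_0$). Note that unlike Proposition~\ref{prop:perservation}, the max values themselves are no longer preserved: the sandwich only gives $f^* - f_w^* = (f^* - f_0(x)) - (f_w^* - f_w(x))$ can be any constant shift consistent with the $\rho$-band, which is precisely the qualitative difference illustrated in Figure~\ref{fig:weak}(b). If desired, the same two-step argument could be packaged as a short lemma that says ``$f_w(x) = f_w^*$ iff $f_0(x) = f^*$,'' from which the proposition follows by taking $\argmax$ on both sides.
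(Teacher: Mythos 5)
Your proof is correct and takes essentially the same route as the paper's: both inclusions are chased directly from Definition~\ref{def:lm_weak}, with the factor $(1-\rho)>0$ forcing $r(x')=0$ for any maximizer of $f_w$, and the gap $r(x')=0$ forcing $f_w(x')=f_w^*$ in the other direction. The only cosmetic difference is that you first unwrap the absolute value into the sandwich $-(1+\rho)\,r(x) \le f_w(x)-f_w^* \le -(1-\rho)\,r(x)$ (essentially the content of the paper's follow-up proposition on suboptimality gaps) before substituting the two cases, whereas the paper plugs them into the absolute-value condition directly.
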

\begin{proof}
Suppose $x'\in\argmax_{x}f(x)$, then by definition
\[
|f^*-f_0(x')|=|f(x')-f_w^*+f^*-f_0(x')|\leq \rho |f^*-f_0(x')|\Rightarrow (1-\rho) |f^*-f_0(x')|\leq 0\Rightarrow x'\in\argmax_{x}f_0(x).
\]
On the other hand, if $x'\in\argmax_{x}f_0(x)$, then
\[
|f_w^*-f(x')|=|f(x')-f_w^*+f^*-f_0(x')|\leq \rho |f^*-f_0(x')|=0\Rightarrow x'\in\argmax_{x}f(x). 
\]
\end{proof}

The next proposition shows the weak $\rho$-adjusted misspecification condition characterizes the suboptimality gap between $f$ and $f_0$.

\begin{proposition}
    Denote $g(x):= f^*_w-f(x)\geq 0$, $g_0(x):=f^*-f_0(x)\geq 0$, then the weak $\rho$-gap-adjusted misspecification condition implies:
    \[
    (1-\rho)g_0(x)\leq g(x)\leq (1+\rho) g_0(x),\quad x\in\cX.
    \]
\end{proposition}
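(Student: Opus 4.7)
The plan is to observe that the weak $\rho$-gap-adjusted misspecification condition, once unfolded, is essentially a direct statement about the difference $g(x) - g_0(x)$. So the proof should be a short algebraic manipulation rather than anything substantive, and no induction or auxiliary lemma is needed.

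First I would rewrite the numerator inside the absolute value in Definition~\ref{def:lm_weak}. Using the definitions $g(x) = f_w^* - f(x)$ and $g_0(x) = f^* - f_0(x)$, the numerator becomes
\[
f_w(x) - f_w^* + f^* - f_0(x) \;=\; -\bigl(f_w^* - f_w(x)\bigr) + \bigl(f^* - f_0(x)\bigr) \;=\; g_0(x) - g(x).
\]
Substituting this into the weak $\rho$-GAM condition, and using that $g_0(x) \geq 0$ (so $|f^* - f_0(x)| = g_0(x)$), yields
\[
\bigl| g_0(x) - g(x) \bigr| \;\leq\; \rho\, g_0(x), \qquad \forall x \in \cX.
\]

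Next I would split the absolute value into two one-sided inequalities. From $g_0(x) - g(x) \leq \rho g_0(x)$ I get $g(x) \geq (1-\rho) g_0(x)$, and from $g(x) - g_0(x) \leq \rho g_0(x)$ I get $g(x) \leq (1+\rho) g_0(x)$. Combining these yields the desired sandwich inequality $(1-\rho) g_0(x) \leq g(x) \leq (1+\rho) g_0(x)$.

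There is no real obstacle here; the only subtlety worth flagging is the sign bookkeeping in the numerator rewriting, and the (already-established) fact that both $g$ and $g_0$ are nonnegative so the absolute value on the denominator disappears. For completeness I would note that $g(x) \geq 0$ by definition of $f_w^*$ as the maximum of $f_w$, and that $\rho < 1$ ensures the lower bound $(1-\rho) g_0(x)$ is consistent with this nonnegativity.
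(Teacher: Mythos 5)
Your proof is correct and matches the paper's approach: the paper simply remarks that the result ``can be proved directly by the triangular inequality,'' and your algebraic unfolding of the numerator as $g_0(x)-g(x)$ followed by splitting the absolute value is exactly that argument spelled out in full.
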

This can be proved directly by the triangular inequality. This reveals the weak $\rho$-gap-adjusted misspecification condition requires $g(x)$ to live in the band $[(1-\rho)g_0(x),(1+\rho) g_0(x)]$, and the concrete maximum values $f_w^*$ and $f^*$ can be arbitrarily different. 

\subsection{Linear Bandits under the Weak Gap-Adjusted Misspecification}\label{sec:weak_regret}

We need to slightly modify LinUCB \citep{abbasi2011improved} and work with the following LinUCBw algorithm.

\begin{algorithm}[!htbp]
\caption{LinUCBw (adapted from \citet{abbasi2011improved})}
	\label{alg:linucb2}
	{\bf Input:}
	Predefined sequence $\beta_t$ for $t=1,2,3,...$ as in eq. \eqref{eq:beta_t_2};
 Set $\lambda=\sigma^2/C^2_w$ and $\mathrm{Ball}_0 = \cW$.
	\begin{algorithmic}[1]
	    \FOR{$t = 0,1,2,... $}
	    \STATE Select $x_t=\argmax_{x \in \cX} \max_{[w^\top,c] \in \mathrm{Ball}_t} [w^\top,c] \begin{bmatrix}x\\1\end{bmatrix}$.
	    \STATE Observe $y_t = f_0(x_t) + \eta_t$.
     \STATE Update 
     \begin{align*}
\Sigma_{t+1} = \lambda I_{d+1} + \sum_{i=0}^{t} \begin{bmatrix}x_i\\1\end{bmatrix} \cdot [x^\top_i,1] \ \mathrm{where}\  \Sigma_0 = \lambda I_{d+1}.
\end{align*}
	    \STATE Update 
	    \begin{align*}
\begin{bmatrix}\hat{w}_{t+1}\\\hat{c}_{t+1}\end{bmatrix} = \argmin_{w,c} \lambda \left \|\begin{bmatrix}w\\c\end{bmatrix} \right\|^2_2+ \sum_{i=0}^{t} (w^\top x_i +c- y_i)^2_2.
\end{align*}
    \STATE Update
	    \begin{align*}
     \mathrm{Ball}_{t+1} = \left \{
    \begin{bmatrix}w\\c\end{bmatrix} \bigg\rvert \left\|\begin{bmatrix}w\\c\end{bmatrix} - \begin{bmatrix}\hat{w}_{t+1}\\\hat{c}_{t+1}\end{bmatrix} \right\|^2_{\Sigma_{t+1}} \leq \beta_{t+1} \right\}.
    \end{align*}
		\ENDFOR
	\end{algorithmic}
\end{algorithm}

\begin{theorem}\label{thm:2}
Suppose Assumptions \ref{ass:bound}, \ref{ass:one}, and \ref{ass:rho} hold. W.l.o.g., assuming $c^*=f^*-f_w^*\leq F$. Set 
\begin{align}
\beta_t = 8\sigma^2 \left(1 + (d+1)\log\left(1+ \frac{t C^2_b (C^2_w+F^2) }{d \sigma^2} \right) + 2\log \left(\frac{\pi^2 t^2}{3\delta} \right)\right).\label{eq:beta_t_2}
\end{align} 
Then Algorithm~\ref{alg:linucb2} guarantees w.p. $> 1-\delta$ simultaneously for all $T=1,2,...$
\begin{align*}
R_T &\leq F +c^*+ \sqrt{\frac{8 (T-1) \beta_{T-1} (d+1)}{(1-\rho)^2} \log \left( 1 + \frac{T C^2_b (C^2_w+F^2) }{d \sigma^2 }\right)}.
\end{align*}
\end{theorem}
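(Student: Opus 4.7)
The plan is to reduce the weak $\rho$-gap-adjusted misspecification problem to the standard $\rho$-GAM problem in dimension $d+1$ via a feature-augmentation trick, and then invoke the regret machinery already developed for Theorem~1. Concretely, for the $w\in\cW$ witnessing Definition~\ref{def:lm_weak}, set $c^* := f^* - f_w^*$ and define the augmented feature and parameter
\[
\tilde{x} := \begin{bmatrix}x\\1\end{bmatrix}\in\R^{d+1},\qquad \tilde{w}_* := \begin{bmatrix}w\\ c^*\end{bmatrix}.
\]
Unrolling the weak GAM definition gives
\[
|\tilde{w}_*^\top \tilde{x} - f_0(x)| = |f_w(x) + c^* - f_0(x)| = |f_w(x) - f_w^* + f^* - f_0(x)| \leq \rho\,(f^* - f_0(x)),
\]
so $\tilde{f}(x) := \tilde{w}_*^\top \tilde{x}$ is a standard $\rho$-GAM approximation of $f_0$ in the sense of Definition~\ref{def:lm}. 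Moreover $\max_x \tilde{f}(x) = f_w^* + c^* = f^*$, consistently with the max-preservation part of Proposition~\ref{prop:perservation}.

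Next, I would observe that Algorithm~\ref{alg:linucb2} is literally Algorithm~\ref{alg:linucb} run on the augmented features $\tilde{x}$: the UCB rule, the ridge estimator, and the confidence ball $\mathrm{Ball}_t$ are all written in the $(d+1)$-dimensional space. The boundedness of Assumption~\ref{ass:bound} lifts to $\|\tilde{x}\|_2 \leq \sqrt{C_b^2+1}$ and $\|\tilde{w}_*\|_2 \leq \sqrt{C_w^2 + F^2}$ (using the w.l.o.g.\ bound $c^*\leq F$). These substitutions are exactly what appears in the $\beta_t$ of eq.~\eqref{eq:beta_t_2}: $d$ is replaced by $d+1$, and $C_w^2$ is replaced by $C_w^2 + F^2$ inside the log, with the $+1$ absorbed into constants.

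Given this reduction, the entire proof of Theorem~1 carries over with only notational change. The deviation bound (Lemma~\ref{lem:delta}), the self-bounding instantaneous-regret bound (Lemma~\ref{lem:r_t}), and the sum-of-squared-regret bound (Lemma~\ref{lem:sos_r_t}) depend on nothing beyond the $\rho$-GAM condition for the linear surrogate, so they apply to $\tilde{f}$ without modification. The feasibility induction for $\tilde{w}_*\in\mathrm{Ball}_t$ in Lemma~\ref{lem:w_t} goes through after the substitution $d\mapsto d+1$ and the new weight-norm bound. Summing and applying Cauchy--Schwarz exactly as in the proof of Theorem~1 produces the $\sqrt{\tfrac{8(T-1)\beta_{T-1}(d+1)}{(1-\rho)^2}\log(1+\tfrac{TC_b^2(C_w^2+F^2)}{d\sigma^2})}$ term. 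The leading additive constant $F+c^*$ in the stated bound is a crude worst-case bound on the first-round regret $r_0$ in the augmented problem, where the predicted value can differ from $f_0(x_0)$ by the sum of the function range $F$ and the learned bias magnitude $c^*$.

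The principal obstacle is verifying that Assumption~\ref{ass:rho}, which was stated in terms of $d$, still suffices to close the self-bounding induction of Lemma~\ref{lem:w_t} in the augmented $(d+1)$-dimensional problem. Inspecting eq.~\eqref{eq:known_rho}, the coefficient that must be $\leq 1/2$ scales like $\rho^2 d^2$ up to logarithmic factors; replacing $d$ by $d+1$ changes this bound by at most a factor of $4$, which is harmless under Assumption~\ref{ass:rho}. The secondary bookkeeping task is propagating the new effective feature norm $\sqrt{C_b^2+1}$ and weight norm $\sqrt{C_w^2+F^2}$ through the potential-function estimate (Lemma~\ref{lem:potential}); this produces the product $C_b^2(C_w^2+F^2)$ inside the logarithm of $\beta_t$ exactly as in eq.~\eqref{eq:beta_t_2}. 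Once these two checks are carried out, the statement follows immediately from the reduction and Theorem~1.
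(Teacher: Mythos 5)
Your proposal is correct and takes essentially the same route as the paper: the paper likewise lifts to the augmented feature $[x^\top,1]^\top$ with intercept $c^*=f^*-f_w^*$, restates the deviation and gap lemmas for the augmented parameter, and reruns the Theorem~1 machinery with $d\mapsto d+1$ and $C_w^2\mapsto C_w^2+F^2$. Your explicit observation that the augmented surrogate is an exact $\rho$-GAM approximation in the sense of Definition~\ref{def:lm} is precisely the reduction the paper uses implicitly, and your check that Assumption~\ref{ass:rho} still closes the feasibility induction in dimension $d+1$ is the right (and only nontrivial) verification.
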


\begin{remark}
The result again shows that LinUCBw algorithm achieves $\tilde{O}(\sqrt{T})$ cumulative regret and thus it is also a no-regret algorithm under the weaker condition (Definition \ref{def:lm_weak}). Note Definition \ref{def:lm_weak} is quite weak which even doesn't require the true function sits within the approximation function class.
\end{remark}

\begin{proof}

The analysis is similar to the $\rho$-gap-adjusted case but includes $c^*=f^*-f^*_w$. For instance, let $\Delta^w_t$ denote the deviation term of our linear function from the true function at $x_t$, then
\begin{align*}
\Delta^w_t = f_0(x_t) - w^\top_* x_t-c^*,
\end{align*}
And our observation model (eq. \eqref{eq:obs}) becomes
\begin{align*}
y_t = f_0(x_t) + \eta_t = w_*^\top x_t + c^* + \Delta^w_t + \eta_t.
\end{align*}
Then similar to Lemma~\ref{lem:delta}, we have the following lemma, whose proof is nearly identical to Lemma~\ref{lem:delta}.
\begin{lemma}[Bound of deviation term]
$\forall t \in \{0,1,\ldots,T-1\}$,
\begin{align*}
|\Delta_t | \leq \frac{\rho}{1-\rho} w^\top_*(x_* - x_t).
\end{align*}
\end{lemma}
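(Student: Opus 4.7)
The plan is to mirror the proof of Lemma~\ref{lem:delta}, which handled the strong $\rho$-GAM case, adapting each step to account for the vertical shift $c^* = f^* - f_w^*$ that the weak condition permits. The key observation is that the quantity $f_{w_*}(x) + c^* - f_0(x)$ plays exactly the role that $f_{w_*}(x) - f_0(x)$ played in the strong case, and the weak GAM condition bounds precisely this shifted quantity by $\rho(f^* - f_0(x))$.

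Concretely, I would proceed as follows. First, rewrite $\Delta_t^w = f_0(x_t) - w_*^\top x_t - c^*$ as $\Delta_t^w = -\bigl(f_{w_*}(x_t) + c^* - f_0(x_t)\bigr)$. The weak $\rho$-GAM condition (Definition~\ref{def:lm_weak}) then gives directly
\begin{align*}
|\Delta_t^w| \;=\; \bigl| f_{w_*}(x_t) - f_w^* + f^* - f_0(x_t)\bigr| \;\leq\; \rho\bigl(f^* - f_0(x_t)\bigr).
\end{align*}
Second, I would invoke Proposition~\ref{prop_weka_rho} which guarantees $x_* \in \argmax_x f_{w_*}(x)$, so that $f_w^* = w_*^\top x_*$ and hence $f^* = w_*^\top x_* + c^*$. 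Substituting this into $f^* - f_0(x_t)$ and using the observation model identity $f_0(x_t) = w_*^\top x_t + c^* + \Delta_t^w$ yields
\begin{align*}
f^* - f_0(x_t) \;=\; w_*^\top(x_* - x_t) - \Delta_t^w.
\end{align*}

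Finally, combining the two bounds gives the chain of inequalities
\begin{align*}
-\rho\bigl(w_*^\top(x_* - x_t) - \Delta_t^w\bigr) \;\leq\; \Delta_t^w \;\leq\; \rho\bigl(w_*^\top(x_* - x_t) - \Delta_t^w\bigr),
\end{align*}
and rearranging the upper and lower inequalities separately produces $\Delta_t^w \leq \tfrac{\rho}{1+\rho}\,w_*^\top(x_* - x_t)$ and $\Delta_t^w \geq -\tfrac{\rho}{1-\rho}\,w_*^\top(x_* - x_t)$. Taking absolute values and using the weaker (larger) of the two coefficients gives the claimed bound.

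There is no genuine technical obstacle here — the argument is essentially a bookkeeping translation of Lemma~\ref{lem:delta}. The only subtlety is ensuring that $x_*$ (the maximizer of $f_0$) is also a maximizer of $f_{w_*}$, so that $f_w^* = w_*^\top x_*$; this is exactly what Proposition~\ref{prop_weka_rho} supplies. Note also that $w_*^\top(x_* - x_t) \geq 0$, since $x_*$ maximizes $f_{w_*}$, so the right-hand side of the bound is nonnegative and the absolute-value inequality is meaningful.
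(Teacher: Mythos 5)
Your proof is correct and follows essentially the same route the paper intends: the paper simply declares this lemma "nearly identical to Lemma~\ref{lem:delta}," and your argument is exactly that translation, with the one genuinely needed new ingredient — invoking Proposition~\ref{prop_weka_rho} so that $f_w^* = w_*^\top x_*$ and hence $f^* = w_*^\top x_* + c^*$, playing the role that preservation of the max value played in the strong case — supplied correctly, along with the observation that $w_*^\top(x_* - x_t) \ge 0$ needed to take absolute values at the end.
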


We also provide the following lemma, which is the counterpart of Lemma~\ref{lem:gap}.

\begin{lemma}
Define $u_t = \left \|\begin{bmatrix}x_t\\1\end{bmatrix} \right\|_{\Sigma_t^{-1}}$ and assume $\beta_t$ is chosen such that $w_*\in \mathrm{Ball}_t$.
Then
\begin{align*}
w_*^\top (x_* - x_t) \leq 2 \sqrt{\beta_t} u_t.
\end{align*}
\end{lemma}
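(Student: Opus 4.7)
The plan is to lift the proof of Lemma~\ref{lem:gap} into the $(d{+}1)$-dimensional augmented parameter space used by LinUCBw. Introduce the shorthand $z_* := [w_*^\top, c^*]^\top$, $\hat{z}_t := [\hat{w}_t^\top, \hat{c}_t]^\top$, and for any $x \in \cX$, $\tilde{x} := [x^\top, 1]^\top$. The crucial observation is that the unknown constant offset cancels in the quantity of interest:
\begin{align*}
w_*^\top(x_* - x_t) = (w_*^\top x_* + c^*) - (w_*^\top x_t + c^*) = z_*^\top \tilde{x}_* - z_*^\top \tilde{x}_t.
\end{align*}
This rewrites the left-hand side in exactly the same form that Lemma~\ref{lem:gap} handled, but now living in the augmented space where $\mathrm{Ball}_t$ is defined and where the algorithm acts optimistically.

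Next, I would let $\tilde{z} \in \argmax_{z \in \mathrm{Ball}_t} z^\top \tilde{x}_t$ and invoke the joint optimism rule of Step~2 of Algorithm~\ref{alg:linucb2}, which maximizes $z^\top \tilde{x}$ over $(x,z) \in \cX \times \mathrm{Ball}_t$. Combined with the standing assumption $z_* \in \mathrm{Ball}_t$, this yields $z_*^\top \tilde{x}_* \leq \tilde{z}^\top \tilde{x}_t$. Substituting and inserting $\pm \hat{z}_t$ gives
\begin{align*}
w_*^\top(x_* - x_t) \leq (\tilde{z} - \hat{z}_t)^\top \tilde{x}_t + (\hat{z}_t - z_*)^\top \tilde{x}_t.
\end{align*}
Applying Cauchy--Schwarz in the $\Sigma_t$ inner product to each summand, and using that $\tilde{z}, z_* \in \mathrm{Ball}_t$ both lie within $\Sigma_t$-distance $\sqrt{\beta_t}$ of $\hat{z}_t$, we obtain $2\sqrt{\beta_t}\,u_t$, with $u_t = \|\tilde{x}_t\|_{\Sigma_t^{-1}}$ by definition.

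The only subtle point is the bookkeeping: the algorithm is optimistic with respect to the \emph{augmented} pair $(w,c)$, so one must be careful that the optimism inequality $z_*^\top \tilde{x}_* \leq \tilde{z}^\top \tilde{x}_t$ is applied in the augmented space rather than the original $\R^d$, and that the $\Sigma_t$-norm bound uses the $(d{+}1)\times(d{+}1)$ Gram matrix from Algorithm~\ref{alg:linucb2}. The genuinely hard step in the overall analysis of Theorem~\ref{thm:2}, namely verifying that the choice of $\beta_t$ in eq.~\eqref{eq:beta_t_2} indeed traps $z_*$ in $\mathrm{Ball}_t$ under the weak $\rho$-GAM condition (the analogue of Lemma~\ref{lem:w_t} with a self-bounding argument on the deviation $\Delta_t^w$), is outside the scope of this lemma and is what carries the real technical weight.
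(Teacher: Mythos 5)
Your proof is correct and follows essentially the same route as the paper's: rewrite $w_*^\top(x_*-x_t)$ as a difference of augmented inner products $z_*^\top\tilde{x}_* - z_*^\top\tilde{x}_t$ (so the offset $c^*$ cancels), apply the optimism of the joint selection rule with $\tilde{z}=\argmax_{z\in\mathrm{Ball}_t}z^\top\tilde{x}_t$, insert $\pm\hat{z}_t$, and bound each term by $\sqrt{\beta_t}\,u_t$ via Cauchy--Schwarz/H\"older in the $(d{+}1)$-dimensional $\Sigma_t$-weighted norms using $\tilde{z},z_*\in\mathrm{Ball}_t$. Your remark that the hypothesis should really be read as membership of the augmented vector $[w_*^\top,c^*]^\top$ in $\mathrm{Ball}_t$ is the right reading of the lemma, and it matches how the paper's proof actually uses the assumption.
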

\begin{proof}
Let $\tilde{w},\tilde{c}$ denote the parameter that achieves $\argmax_{w,c \in \mathrm{Ball}_t} w^\top x_t+c$, by the optimality of $x_t$, 
\begin{align*}
w_*^\top x_* - w^\top_* x_t &=\begin{bmatrix}w_*^\top,c^*\end{bmatrix} \begin{bmatrix}x_*\\1\end{bmatrix}-\begin{bmatrix}w_*^\top,c^*\end{bmatrix} \begin{bmatrix}x_t\\1\end{bmatrix}\\
&\leq \begin{bmatrix}\tilde{w}^\top,\tilde{c}\end{bmatrix} \begin{bmatrix}x_t\\1\end{bmatrix} - \begin{bmatrix}w_*^\top,c^*\end{bmatrix} \begin{bmatrix}x_t\\1\end{bmatrix}\\
&= (\begin{bmatrix}\tilde{w}^\top,\tilde{c}\end{bmatrix} - \begin{bmatrix}\hat{w}_t^\top,\hat{c}_t\end{bmatrix}+\begin{bmatrix}\hat{w}_t^\top,\hat{c}_t\end{bmatrix}-\begin{bmatrix}w_*^\top,c^*\end{bmatrix}) \begin{bmatrix}x_t\\1\end{bmatrix}\\
&\leq \left \|\begin{bmatrix}\tilde{w}^\top,\tilde{c}\end{bmatrix} - \begin{bmatrix}\hat{w}_t^\top,\hat{c}_t\end{bmatrix}\right\|_{\Sigma_t} \left \|\begin{bmatrix}x_t\\1\end{bmatrix} \right \|_{\Sigma^{-1}_t} + \left \|\begin{bmatrix}\hat{w}_t^\top,\hat{c}_t\end{bmatrix}-\begin{bmatrix}w_*^\top,c^*\end{bmatrix}\right \|_{\Sigma_t} \left \|\begin{bmatrix}x_t\\1\end{bmatrix}\right \|_{\Sigma^{-1}_t}\\
&\leq 2\sqrt{\beta_t} u_t
\end{align*}
where the second inequality applies Holder's inequality; the last line uses the definition of $\mathrm{Ball}_t$ (note that both $\begin{bmatrix}\tilde{w}^\top,\tilde{c}\end{bmatrix},\begin{bmatrix}w_*^\top,c^*\end{bmatrix}\in \mathrm{Ball}_t).$
\end{proof} 

The rest of the analysis follows the analysis of Theorem~\ref{thm:main}.
\end{proof}

\section{Unified Misspecified Bandits Framework}\label{app:unified}

For completeness, in this section, we propose a unified misspecified bandits framework, which is able to unify $\epsilon$-uniform misspecification and $\rho$-gap-adjusted misspecification. Formally, it is defined as follows.

\begin{definition}[($\rho,\epsilon$)-Gap-Adjusted Misspecification (($\rho,\epsilon$)-GAM) ]\label{def:gap_adj_unified}
Denote $f^*=\max_{x\in\mathcal{X}} f(x)$. Then we say $f$ is $(\rho,\epsilon)$-gap-adjusted misspecification approximation of $f_0$ for parameters $0 \leq \rho < 1, \epsilon > 0$ if $\forall x \in \cX$,
\begin{align*}
|f(x) - f_0(x)| \leq \rho (f^* - f_0(x)) + \epsilon.
\end{align*}
\end{definition}
\begin{remark}
When $\rho=0$, it reduces to $\epsilon$-uniform misspecification; when $\epsilon=0$, it reduces to $\rho$-GAM; and when $\rho=\epsilon=0$, it reduces to realizable setting. Note here misspecification error is mainly captured by the $\rho (f^* - f_0(x))$ term and $\epsilon$ is only the misspecification error at $x_*$, thus it is much smaller than the uniform misspecification error all over the function domain.
\end{remark}

Under Definition \ref{def:gap_adj_unified}, Algorithm \ref{alg:pe} has the following regret guarantee.

\begin{theorem}[Unified framework regret bound]\label{thm:unified} 
Suppose Assumptions \ref{ass:bound}, \ref{ass:one}, \& \ref{ass:rho2} hold and $\alpha = 1/(kT)$. Then Algorithm \ref{alg:pe} with line 6 replaced by
\begin{align*}
    \cX \leftarrow \left \{ x \in \cX: \max_{b \in \cX} \hat{w}^\top (b - x) \leq 16 \sqrt{\frac{d\log (\frac{1}{\alpha})}{m}} + 12 \sqrt{2d} \epsilon 
\right \}
\end{align*}
guarantees $\forall \ T \geq 1$,
\begin{align*}
R_T \leq O \left(\sqrt{d T \log (kT)} + \epsilon \sqrt{d} T\right).
\end{align*}
\end{theorem}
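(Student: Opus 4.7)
The plan is to follow the three-step structure of Theorem~\ref{thm:main}, but now propagate the additive $\epsilon$ term through the confidence analysis alongside the multiplicative $\rho$ term. Rewrite the observation model as $y_t = w_*^\top x_t + \xi_t + \eta_t$, where $|\xi_t| \leq \rho(f^*-f_0(x_t)) + \epsilon$ by Definition~\ref{def:gap_adj_unified}. The goal for the induction is to maintain the invariant that after episode $i$, for every $x$ still in the active set,
\begin{align*}
f^* - f_0(x) \leq 16\zeta\sqrt{\tfrac{d}{m_i}\log(1/\alpha)} + c_\epsilon\,\epsilon\sqrt{d},
\end{align*}
for a suitable constant $\zeta$ and a constant $c_\epsilon$ tied to the $12\sqrt{2d}\,\epsilon$ slack in the elimination rule.

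The first step is the confidence bound. Mirroring the decomposition $|b^\top(\hat w - w_*)| \leq |b^\top G^{-1}\sum_s x_s \xi_s| + |b^\top G^{-1}\sum_s x_s \eta_s|$, the noise part is still controlled by $2\sqrt{(d/m_i)\log(1/\alpha)}$ via the standard G-optimal-design bound. For the misspecification part, I would use the same Cauchy–Schwarz-plus-G-bound step that gave $\sqrt{2d}\max_x|\xi_x|$ in Theorem~\ref{thm:main}, and then split
\begin{align*}
\sqrt{2d}\max_x|\xi_x| \;\leq\; \sqrt{2d}\rho\cdot\max_{x\in\cX}(f^*-f_0(x)) + \sqrt{2d}\,\epsilon.
\end{align*}
By the inductive hypothesis the first summand is at most $16\sqrt{2d}\rho\zeta\sqrt{(d/m_i)\log(1/\alpha)} + \sqrt{2d}\rho\,c_\epsilon\epsilon\sqrt{d}$, while the second is a pure $\sqrt{2d}\,\epsilon$ contribution that does not depend on $m_i$.

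The second step is to verify that the widened elimination threshold $16\sqrt{d\log(1/\alpha)/m} + 12\sqrt{2d}\,\epsilon$ simultaneously (i) keeps $x^*$ alive, because $\max_b \hat w^\top(b - x^*)$ is at most twice the confidence width, and (ii) implies the desired suboptimality bound for every surviving $x$. This reduces, exactly as in Theorem~\ref{thm:main}, to three constant inequalities: a base-case condition, a condition that twice the confidence radius is below the elimination slack, and a condition that the suboptimality bound reproduces itself. Under Assumption~\ref{ass:rho2} (so $\sqrt{d}\rho \leq 1/16$) and with $\zeta = 3$, $c_\epsilon$ of order one, these constants close on themselves: the $\rho$-coefficient on $\zeta$ stays below $1/2$, the $\rho$-coefficient on $c_\epsilon$ is also below $1/2$, and the free $\sqrt{2d}\,\epsilon$ term produced at each induction step is absorbed by the $12\sqrt{2d}\,\epsilon$ slack in the elimination rule.

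The final step aggregates across batches. There are $L = O(\log T)$ episodes with $m_i = (\lceil 4d\log\log d\rceil+16)2^{i-1}$, and in episode $i+1$ the per-round regret of surviving arms is at most $O(\sqrt{d\log(1/\alpha)/m_i}) + O(\epsilon\sqrt{d})$. Summing as in Theorem~\ref{thm:main} gives
\begin{align*}
R_T \;\leq\; O\bigl(\sqrt{dT\log(kT)}\bigr) + O(\epsilon\sqrt{d}\,T),
\end{align*}
since the first telescoping sum collapses to $O(\sqrt{dm_L\log(1/\alpha)}) = O(\sqrt{dT\log(kT)})$ and the constant-per-round $\epsilon\sqrt{d}$ term integrates to $\epsilon\sqrt{d}\,T$ over the horizon. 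The main technical obstacle I expect is cleanly separating the $\rho$-driven self-bounding recursion from the $\epsilon$-driven additive recursion so that both simultaneously close under Assumption~\ref{ass:rho2}; concretely, this is the bookkeeping that forces the $12\sqrt{2d}\,\epsilon$ (and not a smaller) constant in the modified elimination rule.
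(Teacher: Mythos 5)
Your proposal is correct and follows essentially the same route as the paper's proof: the same inductive invariant with an additive $c_\epsilon\epsilon\sqrt{d}$ slack (the paper's $s'\sqrt{d}\epsilon$ with $s'=48\sqrt{2}$), the same split $|\xi_x|\leq\rho(f^*-f_0(x))+\epsilon$ inside the $\sqrt{2d}\max_x|\xi_x|$ confidence term, the same closing of the constant inequalities under Assumption~\ref{ass:rho2}, and the same batch-wise aggregation yielding $O(\sqrt{dT\log(kT)}+\epsilon\sqrt{d}T)$. The bookkeeping you flag at the end is exactly what the paper resolves via its Conditions 4 and 5, which force $s=12\sqrt{2}$ in the elimination rule.
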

\begin{remark}

Theorem \ref{thm:unified} generalizes the result of Theorem \ref{thm:main} by introducing uniform misspecification error term. Theorem \ref{thm:unified} also generalizes Proposition 5.1 of \citet{lattimore2020learning} by introducing the gap-adjusted misspecification error term.  
\end{remark}
\begin{proof}
The proof closely follows that for Theorem \ref{thm:main}. 

First, we prove the below statement through induction:
The optimal action $x_*$ is never eliminated and the suboptimality of uneliminated arms after the batch with $m$ epsiodes is bounded by $16\zeta \sqrt{d\log \left(\frac{1}{\alpha}\right)/m} + s'\sqrt{d} \epsilon$, where $\zeta$ is the same as in the proof of Theorem \ref{thm:main}. Based on the induction assumption, we have
\begin{align*}
|x^\top(\hat{w}-w_*)| &\leq 2 \sqrt{\frac{d\log \left(\frac{1}{\alpha}\right)}{m}} + \sqrt{2d} \max_{a \in \cX} |\xi_a|\\
&\leq 2 \sqrt{\frac{d\log \left(\frac{1}{\alpha}\right)}{m}} + \sqrt{2d} \left(\rho \left(16\zeta \sqrt{\frac{d\log \left(\frac{1}{\alpha}\right)}{m/2}} + s'\sqrt{d} \epsilon \right) + \epsilon \right)\\
&\leq 8\sqrt{\frac{d\log \left(\frac{1}{\alpha}\right)}{m}} + \sqrt{2d}(\epsilon + \rho s'\sqrt{d} \epsilon)
\end{align*}

Therefore, it holds that
\begin{align*}
\max_{b \in \cX} \hat{w}^\top(b - x_*) &= \hat{w}^\top (\hat{x}-x_*)\\
&\leq w_*^\top (\hat{x} - x_*) + 16 \sqrt{\frac{d\log \left(\frac{1}{\alpha}\right)}{m}} + \sqrt{d} \epsilon \left(2\sqrt{2} + \frac{\sqrt{2}}{8}s' \right)\\
&\leq 2\epsilon + 16 \sqrt{\frac{d\log \left(\frac{1}{\alpha}\right)}{m}} + \sqrt{d} \epsilon \left(2\sqrt{2} + \frac{\sqrt{2}}{8}s' \right)\\
&\leq \sqrt{2d} \epsilon + 16\sqrt{\frac{d\log \left(\frac{1}{\alpha}\right)}{m}} + \sqrt{d} \epsilon \left(2\sqrt{2} + \frac{\sqrt{2}}{8}s' \right),
\end{align*}
which requires \textbf{Condition 4}:
\begin{align*}
3\sqrt{2} + \frac{\sqrt{2}}{8}s' \leq s.
\end{align*}

If $x$ is not eliminated after $m$ episodes,
\begin{align*}
16 \sqrt{\frac{d \log \left(\frac{1}{\alpha}\right)}{m}} + s\sqrt{d} \epsilon & \geq \hat{w}^\top (x_* - x)\\
& \geq w^\top_* (x_* - x) - 16 \sqrt{\frac{d\log \left(\frac{1}{\alpha}\right)}{m}} - s \sqrt{d} \epsilon\\
& \geq f^* - \epsilon - w^\top_* x - 16\sqrt{\frac{d\log \left(\frac{1}{\alpha}\right)}{m}} - s\sqrt{d} \epsilon\\
& \geq f^* - f_0(x) - 2\epsilon - \rho s' \sqrt{d} \epsilon - s\sqrt{d}\epsilon - 16\sqrt{\frac{d\log \left(\frac{1}{\alpha}\right)}{m}} - 16\rho \zeta \sqrt{\frac{d\log \left(\frac{1}{\alpha}\right)}{m/2}},
\end{align*}
where the last inequality is due to
\begin{align}
|f_0(x) - w^\top_* x| &\leq \rho(f^* - f_0(x)) + \epsilon\\
& \leq 16\rho \zeta \sqrt{\frac{d\log \left(\frac{1}{\alpha}\right)}{m/2}} + \rho s' \sqrt{d} \epsilon + \epsilon.
\end{align}

Therefore,
\begin{align}
f^* - f_0(x) \leq 16\zeta\sqrt{\frac{d\log \left(\frac{1}{\alpha}\right)}{m}} + \sqrt{d} \epsilon (2s + \rho s' + \sqrt{2}),
\end{align}
which requires \textbf{Condition 5}:
\begin{align}
2s + \rho s' + \sqrt{2} \leq s'.
\end{align}

Consider \textbf{Condition 4} and \textbf{Condition 5} together and it suffices to choose
\begin{align}
s = 12\sqrt{2} \quad \mathrm{and} \quad s'=48\sqrt{2}.
\end{align}

The remaining proof follows the proof in Theorem \ref{thm:main} by combining the episodes together.
\end{proof}

\end{document}